\newtheorem{assumption}{Assumption}
\newtheorem{property}{Property}
\newtheorem{lemma}{Lemma}
\newtheorem{theorem}{Theorem}
\newtheorem{corollary}{Corollary}
\newcommand\vldbdoi{10.14778/3529337.3529343}
\newcommand\vldbpages{}
\newcommand\vldbvolume{15}
\newcommand\vldbissue{8}
\newcommand\vldbyear{2022}
\newcommand\vldbauthors{\authors}
\newcommand\vldbtitle{\shorttitle} 
\newcommand\vldbavailabilityurl{URL_TO_YOUR_ARTIFACTS}
\newcommand\vldbpagestyle{empty} 
\begin{document}
\title{Distributed Learning of Fully Connected Neural Networks using Independent Subnet Training}

\author{Binhang Yuan}
\affiliation{%
  \institution{Rice University}
}
\email{by8@rice.edu}

\author{Cameron R. Wolfe}
\affiliation{%
  \institution{Rice University}
}
\email{crw13@rice.edu}

\author{Chen Dun}
\affiliation{%
  \institution{Rice University}
}
\email{cd46@rice.edu}

\author{Yuxin Tang}
\affiliation{%
  \institution{Rice University}
}
\email{yuxin.tang@rice.edu}

\author{Anastasios Kyrillidis}
\affiliation{%
  \institution{Rice University}
}
\email{anastasios@rice.edu}

\author{Chris Jermaine}
\affiliation{%
  \institution{Rice University}
}
\email{cmj4@rice.edu}

\renewcommand{\shortauthors}{Binhang, et al.}
\begin{abstract}
Distributed machine learning (ML) can bring more computational resources to bear than single-machine learning, thus enabling reductions in training time. 
Distributed learning partitions models and data over many machines, allowing model and dataset sizes beyond the available compute power and memory of a single machine.
In practice though, distributed ML is challenging when distribution is mandatory, rather than chosen by the practitioner.
In such scenarios, data could unavoidably be separated among workers due to limited memory capacity per worker or even because of data privacy issues. 
There, existing distributed methods will utterly fail due to dominant transfer costs across workers, or do not even apply.

We propose a new approach to distributed fully connected neural network learning, called independent subnet training (IST), to handle these cases. 
In IST, the original network is decomposed into a set of narrow subnetworks with the same depth.
These subnetworks are then trained locally before parameters are exchanged to produce new subnets and the training cycle repeats.
Such a naturally ``model parallel'' approach limits memory usage by storing only a portion of network parameters on each device. 
Additionally, no requirements exist for sharing data between workers (i.e., subnet training is local and independent) and communication volume and frequency are reduced by decomposing the original network into independent subnets.
These properties of IST can cope with issues due to distributed data, slow interconnects, or limited device memory, making IST a suitable approach for cases of mandatory distribution.
We show experimentally that IST results in training times that are much lower than common distributed learning approaches. 
\end{abstract}

\maketitle

\pagestyle{\vldbpagestyle}
\begingroup\small\noindent\raggedright\textbf{PVLDB Reference Format:}\\
\vldbauthors. \vldbtitle. PVLDB, \vldbvolume(\vldbissue): \vldbpages, \vldbyear.\\
\href{https://doi.org/\vldbdoi}{doi:\vldbdoi}
\endgroup
\begingroup
\renewcommand\thefootnote{}\footnote{\noindent
This work is licensed under the Creative Commons BY-NC-ND 4.0 International License. Visit \url{https://creativecommons.org/licenses/by-nc-nd/4.0/} to view a copy of this license. For any use beyond those covered by this license, obtain permission by emailing \href{mailto:info@vldb.org}{info@vldb.org}. Copyright is held by the owner/author(s). Publication rights licensed to the VLDB Endowment. \\
\raggedright Proceedings of the VLDB Endowment, Vol. \vldbvolume, No. \vldbissue\ %
ISSN 2150-8097. \\
\href{https://doi.org/\vldbdoi}{doi:\vldbdoi} \\
}\addtocounter{footnote}{-1}\endgroup

\ifdefempty{\vldbavailabilityurl}{}{
\vspace{.3cm}
\begingroup\small\noindent\raggedright\textbf{PVLDB Artifact Availability:}\\
Code and data are available at \url{https://github.com/BinhangYuan/IST_Release}.
\endgroup
}

\section{Introduction}
Distributed training of neural networks (NN) over a compute cluster is a common task in modern computing systems \cite{ratner2019sysml,dean2012large, chilimbi2014project, li2014scaling, hadjis2016omnivore}.
Sometimes, it is the case that distributed training is a choice, and the practitioner is fully in control of the training environment.
Namely, practitioners opt for distribution with the goal of using extra hardware to lower the wall-clock time to convergence, or to allow more resources (such as memory or CPU/GPU cycles) to be brought to bear on the problem of training a model.  Consider the task of training a model such as GPT-3 \cite{brown2020language}, which requires on the order of 1000 years of GPU time to train.  Thousands of GPUs can be used to lower the time to weeks or months.  In such a training scenario, the different sites or compute units are typically connected with a high-speed network, and the hardware is often carefully tailored to the task of distributed training.

However, there are other cases where distribution is \textit{mandatory} and the hardware may be sub-optimal---very far from the idealized environment a company such as OpenAI uses to train GPT-3.
For example, consider a case where the training dataset is fragmented across several locations and organizations with privacy mandates preventing the possibility of centralized computing \cite{cordis2019machine, rieke2020future, courtiol2019deep, srinidhi2020deep, zhu2020application}.
The training set may be large, and stored across hundreds of machines \cite{chen2019federated, hard2018federated, pichai2019privacy, ramaswamy2019federated, yang2018applied, de2019federated, leroy2019federated}.

Here, the data sits where it happens to sit, and the computing environment is often not under the practitioners' control, forcing NN training to be conducted over a less-than-ideal hardware setup (e.g., too many compute nodes, CPUs, low-end GPUs, low-bandwidth interconnects, etc.).
Such scenarios arise often in practice \cite{chai2019towards, chai2020tifl, khan2021federated}. Even NN training on public compute clouds (e.g., Amazon EC2\footnote{  89\% of cloud-based deep learning projects are executed on EC2, according to Amazon's marketing materials.}) suffers from the combination of slow interconnects with high-performance GPUs \cite{mehrotra2012performance}.

We argue that common methods of distributing ML computations cannot be expected to handle such non-ideal  environments gracefully, and new methods are needed.  In distributed NN training, existing methods are roughly categorized into \emph{model parallel} and \emph{data parallel} methodologies. In practice, data parallel methodologies are most commonly used and supported due to their ease of implementation \cite{abadi2016tensorflow, paszke2017automatic}.
In model parallel training \cite{dean2012large,hadjis2016omnivore}, portions of the NN are partitioned across different compute nodes, while, in the latter \cite{zhang1990efficient, farber1997parallel, raina2009large}, the complete NN is updated with different data on each compute node.
Data parallel methods suffer when bandwidth is limited because they must transfer an entire model to each site in order to synchronize the computation.
For a large model with many parameters, this is not a reasonable requirement.
However, in typical mandatory distribution scenarios, model parallel methods are not a reasonable option, either.
When data are sharded across sites, model parallel computing implies that different parts of a model can only be updated to reflect the data present at any given node.
For these parts to stay synchronized, very fine-grained communication is required.
Thus, neither data parallel nor model parallel is fully capable of handling mandatory distribution scenarios.

\noindent
\textbf{Independent subnet training.}
In response to this, we propose independent subnet training (IST), a novel distributed training technique on fully connected NNs that combines techniques from model and data parallel training to maximize communication efficiency. 
Inspired by dropout \cite{srivastava2014dropout} and approximate matrix multiplication \cite{drineas2006fast}, IST decomposes fully-connected NN layers by distributing the neurons disjointly across different sites, forming a group of \emph{subnets}.
Then, each of these subnets is trained independently for one or more local stochastic gradient descent (SGD) iterations before synchronization \cite{lin2018don}.
After synchronization, parameters are re-distributed based on a new, random neuron sampling, and the local subnet training process repeats.

IST focuses upon the distributed training of NNs with fully-connected layers.
Such a focus has also applications in diverse NN architectures (e.g., convolutional NNs \cite{he2016deep}): the majority of NN models typically contain large, fully-connected layer, and such layers typically dominate the total number of parameters.
As such, IST can be applied to the fully-connected portion of the networks to yield a performance speedup; see Sec. \ref{S:experiments}.

In cases of mandatory distribution, model parallel training is impractical because it requires that all data is present on the server that passes data into the network's input module.\footnote{If data is fragmented across many machines, model parallel training would struggle greatly to visit the entire dataset during training, as the input module is only stored on a single node and all data used during a particular training round must be present on this node.}
Furthermore, we claim that, under mandatory distribution, IST is more capable than techniques like data parallel training due to its ability to reduce communication volume and memory usage to cope with hardware limitations.
Synchronization in IST is simply an exchange of parameters between sites\footnote{Each node in an $n$-machine cluster will receive a fraction between $\frac{1}{n^2}$ and $\frac{1}{n}$ of total model parameters under IST, while data parallel training requires \emph{all} model weights to be communicated between machines.} (i.e., no parameters are shared between subnets) and no synchronization is required during local updates, thus reducing per-step communication volume on multiple fronts.
Furthermore, IST limits its memory usage by only sending a small portion of its parameters to each device, which prevents model capacity from being limited by the  memory of a single device.

\noindent
\textbf{Contributions.}
Our proposal aggressively reduces the communication bottlenecks that plague the scalability of most popular methods of distributed NN training. 
\emph{As such, IST is most beneficial for training networks with fully-connected layers in cases of mandatory distribution, where training is highly-distributed and hardware is less-than-ideal.}
The key contributions of our work can be summarized as follows.
\begin{itemize}[leftmargin=*]
\item We propose IST, a distributed training methodology that combines ideas from model and data parallel training by breaking the original NN into a set of disjoint subnetworks that are distributed, locally trained, and re-assembled per iteration.
\item We evaluate IST on speech recognition, image classification (CIFAR10 and full ImageNet\footnote{We underline the use of the full ImageNet dataset \cite{kolesnikov2020big, codreanu2017scale} that includes 14,197,122 images, divided into 21,841 classes.}), and large-scale product recommendation tasks.
Using bandwidth-optimal ring all-reduce \cite{xu2018nccl}, IST is shown to improve time-to-convergence by as much as $10\times$ in comparison to a state-of-the-art implementation of data parallel training and ``vanilla'' local SGD \cite{lin2018don} (i.e., the only practically viable options under mandatory distribution), as well as surpass the performance of the widely-used ensemble learning method.
\item We demonstrate that IST, by enabling models with larger embedding dimensions (i.e., too large for data parallel training) to be trained, is capable of solving an ``extreme'' product recommendation task with improved generalization.
\item Finally, we theoretically show that such IST decomposition still guarantees sublinear convergence to a first-order stationary point on expectation under common assumptions.
\end{itemize}

\section{Training via Independent Subnets} 

\begin{figure*} [!t]
  \centering
    \includegraphics[width=0.5\linewidth]{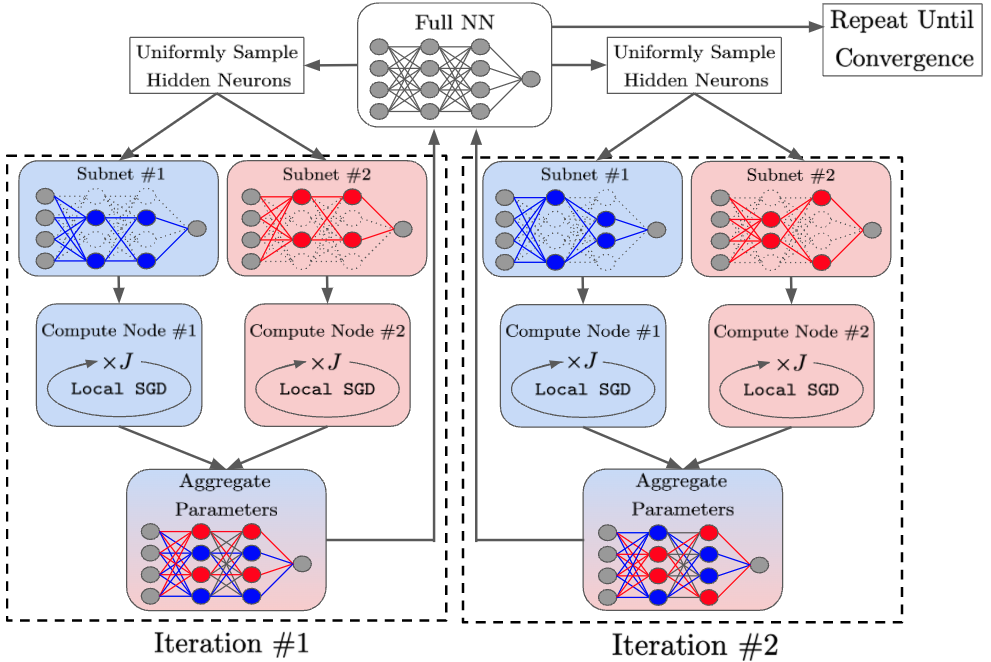}
    \caption{Schematic depiction of a two-hidden-layer NN being trained with IST across two nodes. Each layer's neurons are partitioned randomly to a single node, excluding input and output neurons (i.e., these are shared between sites). The first two iterations of IST are depicted, but the same process of sampling, training, and aggregating subnets repeats until convergence.}
  \label{fig:ist_depict}
\end{figure*}

\subsection{Methodology} \label{S:methodology}
\noindent
\textbf{Notation.}
Assume $n$ sites in a distributed system.
Let $f_l$ denote that vector of activations at layer $l$. 
$f_t$ denotes the set of activations at the final or ``top'' layer of the network, and $f_0$ denotes the feature vector that is input into the network. 
Assume that the number of neurons at layer $l$ is $N_l$.  
Let $\ell(w, \cdot)$ denote the loss function of a NN with parameter $w$.
Given samples $X:=\{x_i, y_i\}_{i = 1}^q$, we aim to find a $w^\star$ that minimizes the \emph{empirical loss} over a set of labeled examples:
\begin{align}
w^\star \in \arg\min_{w} \frac{1}{q}\sum_{i = 1}^q \ell\left(w, \left\{x_i, y_i\right\} \right). \label{eq:loss}
\end{align}
Although \eqref{eq:loss} can be solved in numerous ways  \cite{wright1999numerical, zeiler2012adadelta, kingma2014adam, duchi2011adaptive, ruder2016overview}, nearly all NN training is accomplished using some variant of SGD.
Here, $\eta > 0$ is the learning rate and $i_t$ denotes a subset of training examples from $X$.

\noindent \textbf{Constructing Subnets.}
IST is a distributed training regime that randomly partitions hidden neurons via uniform assignment to one of $n$ possible compute nodes.
Neurons assigned to the same compute node form a ``subnet''.
Then, the weights of the full NN are partitioned accordingly based on the active neurons for each subnet.
Hidden neurons are assigned to exactly one subnet to ensure that $i)$ all neurons are included in training and $ii)$ the same neuron is not simultaneously partitioned to multiple subnets.

Subnet construction is depicted in Figure \ref{fig:ist_depict} for a two-hidden-layer NN distributed across $n=2$ compute nodes.
Input and output layers are fully utilized at all sites.
Notably, certain parameters are not partitioned to any subnet within multi-layer NNs.\footnote{Parameters are only active within a subnet if both input and output neurons associated with that parameter are sampled in the same subnet. For NNs with a single hidden layer, all parameters are included in some subnet because input and output neurons are shared across compute nodes.}
However, in contrast to ensemble-style techniques, IST randomly samples new subnets frequently throughout the learning process, ensuring that all parameters have a high likelihood of being trained sufficiently after several subnet groups have been sampled.

Subnet output is computed by masking (i.e., setting to zero) inactive neurons and scaling remaining activations by a factor of $n^2$ (i.e., to counteract neuron removal with uniform probability $\frac{1}{n}$).
Such a forward pass, which is formalized in Appendix \ref{A:unbiased}, provides an unbiased estimate of the full NN forward pass (excluding activation functions).
Furthermore, performing uniform sampling of neurons independently at each layer yields sublinear convergence to a first-order stationary point on expectation; see Appendix \ref{A:ist_conv}.
Thus, we adopt this uniform sampling policy in IST due to its unbiased nature and rigorous theoretical guarantees. 

\noindent\textbf{Distributing Subnets.}
The $n$ subnets produced by IST are disjoint, meaning that no model parameters are simultaneously partitioned to multiple subnets.
As a result, when distributed to a separate compute node, subnets $i)$ require no cross-site communication during their forward pass, and $ii)$ only require a fraction $\frac{1}{n^2}$ of layer parameters to be sent to each compute node. 
Thus, \emph{subnets can be distributed to separate compute nodes without significant communication overhead and trained with no dependence upon other subnets---an approach that is adopted directly within IST}.
	
\noindent \textbf{Training Subnets.}
For training, IST sends each of the $n$ subnets to a separate compute node and performs $J$ iterations of local SGD \cite{lin2018don}.
After such independent training iterations, subnet parameters are copied back into the full NN, where no collisions occur because the parameter partition is disjoint. 
Then, a new group of subnets is constructed through random sampling (i.e., a ``re-sampling'' of network parameters) and the process repeats.

Unlike ensemble methods that independently train each subnet and aggregate parameters into the full NN once at the end of training, IST re-samples subnets frequently and trains them for a shorter number of iterations between re-samplings.
Such re-sampling is necessary to avoid the accumulation of random effects, as the expected input to a neuron---despite being unbiased---will shift after backpropagation.
Such a shift may be inconsistent across sites, because subnets are trained on data samples from the same distribution; but, re-sampling---which is not present within ensemble methods---guards against such an occurrence.

\subsection{Additional Considerations} \label{S:add_consider}

\noindent \textbf{Correcting Distributional Shift.}
The analysis of the unbiased subnet forward pass in Appendix \ref{A:unbiased} does not consider the NN's non-linear activation function.\footnote{$\mathbb{E}[x] = b$ does not imply that  $\mathbb{E}[f(x)] \approx f(b)$ for some random variable $x$ when $f$ is non-linear.}
Within IST, the inputs to each subnet neuron are sub-sampled and scaled by a factor of $n^2$ to unbias the neuron's activation, which increases the standard deviation of the input to each neuron by a factor of $n$.
As a result, extreme input values are more likely to be observed during training (i.e., when using subnets) than during deployment. 
To correct this distributional mismatch, we remove the $n^2$ correction factor and instead compute the mean $\mu$ and standard deviation $\sigma$ of the inputs to each neuron during training and transform subnet output as $x = (x - \mu)/\sigma$ before passing it through the non-linear activation function.
After training is complete, we compute $\mu$ and $\sigma$ for each neuron over a small subset of training data using the full network---these values can then be used when the network is deployed.

Although this approach is similar to batch normalization \cite{ioffe2015batch}, the motivation for its use is much different. 
Namely, while batch normalization maintains a non-saturated range of neuron input during training to accelerate convergence and improve generalization, IST will not work in the absence of such normalization.
The distributional shift encountered when deploying the network must be corrected, making this modification an essential component of IST, rather than an aid to model training and performance.

\noindent \textbf{Other Architectures.}
IST can be extended to common network architectures (e.g., convolutional NNs) by applying IST only to fully-connected layer(s) within the network (i.e., these exist within most modern convolutional NN architectures).
Here, the fully-connected layers would be decomposed as described previously, while the rest of the network is broadcast to every site during training.
Such an approach has significant benefits, as fully-connected layers tend to contain a large portion of network parameters.\footnote{Consider the full ImageNet dataset \cite{deng2009imagenet, krizhevsky2012imagenet} for a deep model like ResNet50: the convolutional layers have 17,614,016 parameters (67.2MB, $28.2\%$), whereas the fully-connected layer has 44,730,368 parameters (170.6MB, $71.8\%$) that utilized in IST.}
Thus, improving the efficiency of distributed training over fully-connected layers benefits the distributed training process for the entire network. 

\subsection{Analysis}
IST reduces communication overhead in comparison to data parallel training approaches, which broadcast all parameters across sites during each round of training.
Measuring the inflow to each site, the total network traffic of data parallel training per gradient step is (in floating point numbers transferred):
\begin{align*}
    \sum_{i = 1}^t nN_{i-1}N_i. \\[-18pt] \nonumber
\end{align*}
In contrast, in IST, each site receives current parameters every $J$ gradient steps (i.e., assuming $J$ iterations of local SGD are performed between re-sampling rounds).
Furthermore, subsampling the NN into multiple subnets further reduces the communication cost of IST because input/ouput layers are partitioned (not broadcast) across nodes and each node receives only a $\frac{1}{n}$ ratio of other network parameters. 
The total network traffic of IST per gradient step is:
\begin{align*}
\frac{N_0 N_1 + N_{t-1}N_t}{J} + \sum_{i = 1}^l \frac{N_{i-1}N_i}{n \times J}. \\[-18pt] \nonumber    
\end{align*}
 
Similarly, IST reduces computational resource utilization in comparison to data parallel.
Given the FLOPs required by matrix multiplications during forward/backward steps, in  ``classical'' data parallel training, the number of FLOPS required per gradient step is: 
\begin{align*}
    4 \sum_{i = 1}^l BN_{i-1}N_i.\\[-18pt] \nonumber
\end{align*}
In contrast, the number of FLOPS gradient step within IST is: 
\begin{align*}
    4BN_0 N_1 + 4BN_{t-1}N_t
+ 4B\sum_{i = 1}^l \frac{N_{i-1}N_i}{n}. \\[-18pt] \nonumber    
\end{align*}

Note that this computational reduction indicates that training models with IST reduces memory requirements, which enables the training of larger models as shown in Sec. \ref{S:experiments}.

\begin{table*}[h]
\centering
\begin{small}
\begin{tabular}{lclclclclclclclclcl}
\toprule
\multicolumn{19}{c}{Google Speech 2 Layer} \\
\midrule
& & \multicolumn{5}{c}{Data Parallel} & & \multicolumn{5}{c}{Local SGD} & & \multicolumn{5}{c}{IST} \\ \cmidrule{3-7} \cmidrule{9-13} \cmidrule{15-19}
Accuracy & & 2 Node & & 4 Node & & 8 Node & & 2 Node & & 4 Node & & 8 Node & &  2 Node & & 4 Node & & 8 Node \\
\cmidrule{1-1} \cmidrule{3-3} \cmidrule{5-5} \cmidrule{7-7} \cmidrule{9-9} \cmidrule{11-11} \cmidrule{13-13} \cmidrule{15-15} \cmidrule{17-17} \cmidrule{19-19}
0.63 & & 118 & & 269 & & 450 & & 68 & & 130 & & 235 & & 35 & & 28 & & \textcolor{blue}{\textbf{24}} \\
0.75 & & 759 & & 1708 & & 2417 & & 444 & & 742 & & 1110 & & 231 & & \textcolor{blue}{\textbf{167}} & & 192 \\ 
\midrule
\multicolumn{19}{c}{Google Speech 3 Layer } \\
\midrule
& & \multicolumn{5}{c}{Data Parallel} & & \multicolumn{5}{c}{Local SGD} & & \multicolumn{5}{c}{IST} \\ \cmidrule{3-7} \cmidrule{9-13} \cmidrule{15-19}
Accuracy & & 2 Node & & 4 Node & & 8 Node & & 2 Node & & 4 Node & & 8 Node & &  2 Node & & 4 Node & & 8 Node \\
\cmidrule{1-1} \cmidrule{3-3} \cmidrule{5-5} \cmidrule{7-7} \cmidrule{9-9} \cmidrule{11-11} \cmidrule{13-13} \cmidrule{15-15} \cmidrule{17-17} \cmidrule{19-19}
0.63 & & 376 & & 1228 & & 1922 & & 182 & & 586 & & 1115 & & \textcolor{blue}{\textbf{76}} & & 141 & & 300 \\
0.75 & & 4534 & & 9340 & & 14886 & & 2032 & & 4107 & & 6539 & & 812 & & \textcolor{blue}{\textbf{664}} & & 1161 \\ 
\midrule
\multicolumn{19}{c}{CIFAR10 Resnet18} \\
\midrule
& & \multicolumn{5}{c}{Data Parallel} & & \multicolumn{5}{c}{Local SGD} & & \multicolumn{5}{c}{IST} \\ \cmidrule{3-7} \cmidrule{9-13} \cmidrule{15-19}
Accuracy & & 2 Node & & 4 Node & & 8 Node & & 2 Node & & 4 Node & & 8 Node & &  2 Node & & 4 Node & & 8 Node \\
\cmidrule{1-1} \cmidrule{3-3} \cmidrule{5-5} \cmidrule{7-7} \cmidrule{9-9} \cmidrule{11-11} \cmidrule{13-13} \cmidrule{15-15} \cmidrule{17-17} \cmidrule{19-19}
0.85 & & 21775 & & 13689 & & 6890 & & 18769 & & 12744 & & 7020 & & 15093 & & 7852 & & \textcolor{blue}{\textbf{5241}} \\
0.90 & & 54002 & & 38430 & & 17853 & & 36891 & & 22198 & & \textcolor{blue}{\textbf{12157}} & & 33345 & & 16798 & & 13425 \\ 
\midrule
\multicolumn{19}{c}{Full ImageNet VGG12} \\
\midrule
& & \multicolumn{5}{c}{Data Parallel} & & \multicolumn{5}{c}{Local SGD} & & \multicolumn{5}{c}{IST} \\ \cmidrule{3-7} \cmidrule{9-13} \cmidrule{15-19}
Accuracy & & 2 Node & & 4 Node & & 8 Node & & 2 Node & & 4 Node & & 8 Node & &  2 Node & & 4 Node & & 8 Node \\
\cmidrule{1-1} \cmidrule{3-3} \cmidrule{5-5} \cmidrule{7-7} \cmidrule{9-9} \cmidrule{11-11} \cmidrule{13-13} \cmidrule{15-15} \cmidrule{17-17} \cmidrule{19-19}
0.20 & & 108040 & & 278542 & & 504805 & & 6900 & & 14698 & & 30441 & & \textcolor{blue}{\textbf{3629}} & & 4379 & & 5954 \\
0.26 & & 225911 & & 393279 & & 637188 & & 15053 & & 22055 & & 39439 & & \textcolor{blue}{\textbf{6189}} & & 7711 & & 10622 \\
\bottomrule
\end{tabular}
\caption {The time (in seconds) to reach various levels of accuracy.}
\label{tab:converge}
\end{small}
\end{table*}

\noindent \textbf{Convergence Guarantees.}
We show that the IST decomposition guarantees convergence to a first-order stationary point on expectation in the distributed setting.
Namely, under common assumptions of smoothness, Lipschitz continuity of the objective, and stochastic error boundedness, IST converges sublinearly to a bounded error region around a stationary point; see Appendix \ref{A:ist_conv}.

\section{Empirical Evaluation} \label{S:experiments}

In this section, we design a set of experiments that showcase the potential benefits of the IST approach under cases of mandatory distribution with limited hardware capabilities.
In these cases, we assume slow networks connections with CPUs or GPUs (possibly with limited memory) available on each node.
We consider a wide variety of learning tasks and network architectures (i.e., both fully-connected NNs and more complex NNs that contain fully-connected layers).

\subsection{Setup and Details}

As previously mentioned, model parallel training is not appropriate for mandatory distribution, as all needed data must be stored on the node that houses the network's input module. 
Furthermore, due to the assumption of limited memory on each device, our experiments typically consider shallow, fully-connected NNs with wide hidden layers.\footnote{The memory usage of IST scales linearly with increasing network depth, but small portions of each hidden layer can be partitioned to each subnet in order to limit increased memory usage due to larger hidden layers.}
Popular model parallel training packages (e.g., Gpipe \cite{huang2019gpipe}) struggle to perform well on wide models with few layers, providing further evidence that model parallel training is not the proper training approach when distribution is mandatory.
As such, we adopt local SGD \cite{lin2018don}, data parallel training, and ensemble learning as our major experimental baselines.

\noindent
\textbf{Experimental Settings.} We consider the following scenarios: 
\begin{itemize}[leftmargin=0.5cm]
    \item  \textit{Google Speech Commands} \cite{warden2018speech}: We learn a 2-layer network of 4096 neurons and a 3-layer network of 8192 neurons to recognize 35 labeled keywords from audio waveforms (in contrast to the 12 keywords in prior reports \cite{warden2018speech}). We represent each waveform as a 4096-dimensional feature vector \cite{stevens1937scale}. 
    \item \textit{Image classification on CIFAR10 and full ImageNet} \cite{he2016deep, simonyan2014very}: We train the Resnet18 model over CIFAR10, and the VGG12 model over full ImageNet (see Section \ref{S:add_consider} for a discussion of IST and non-fully connected architectures). \textbf{Note that we include the complete ImageNet dataset with all $21,841$ categories and report the top-10 accuracy} \cite{kolesnikov2020big, ridnik2021imagenet, dosovitskiy2020image}. 
    \item \textit{Amazon-670k} \cite{bhatia2019the}: We train a 2-layer, fully-connected neural network, which accepts a $135,909$-dimensional input feature, and generates a prediction over $670,091$ output labels.
\end{itemize}

We train Google speech and Resnet18 on CIFAR10 on three AWS CPU clusters, with 2, 4, and 8 CPU instances (\texttt{m5.2xlarge}).  We train the VGG model on full ImageNet and Amazon-670k extreme classification network on three AWS GPU clusters, with 2, 4, and 8 GPU machines (\texttt{p3.2xlarge}).
Our choice of AWS was deliberate, as it is a common platform for distributed training and presents the common challenge faced by many consumers---distributed ML without a super-fast interconnect.

\begin{figure*}[h]
    \centering
    \begin{subfigure}
        \centering
        \includegraphics[width=0.24\textwidth]{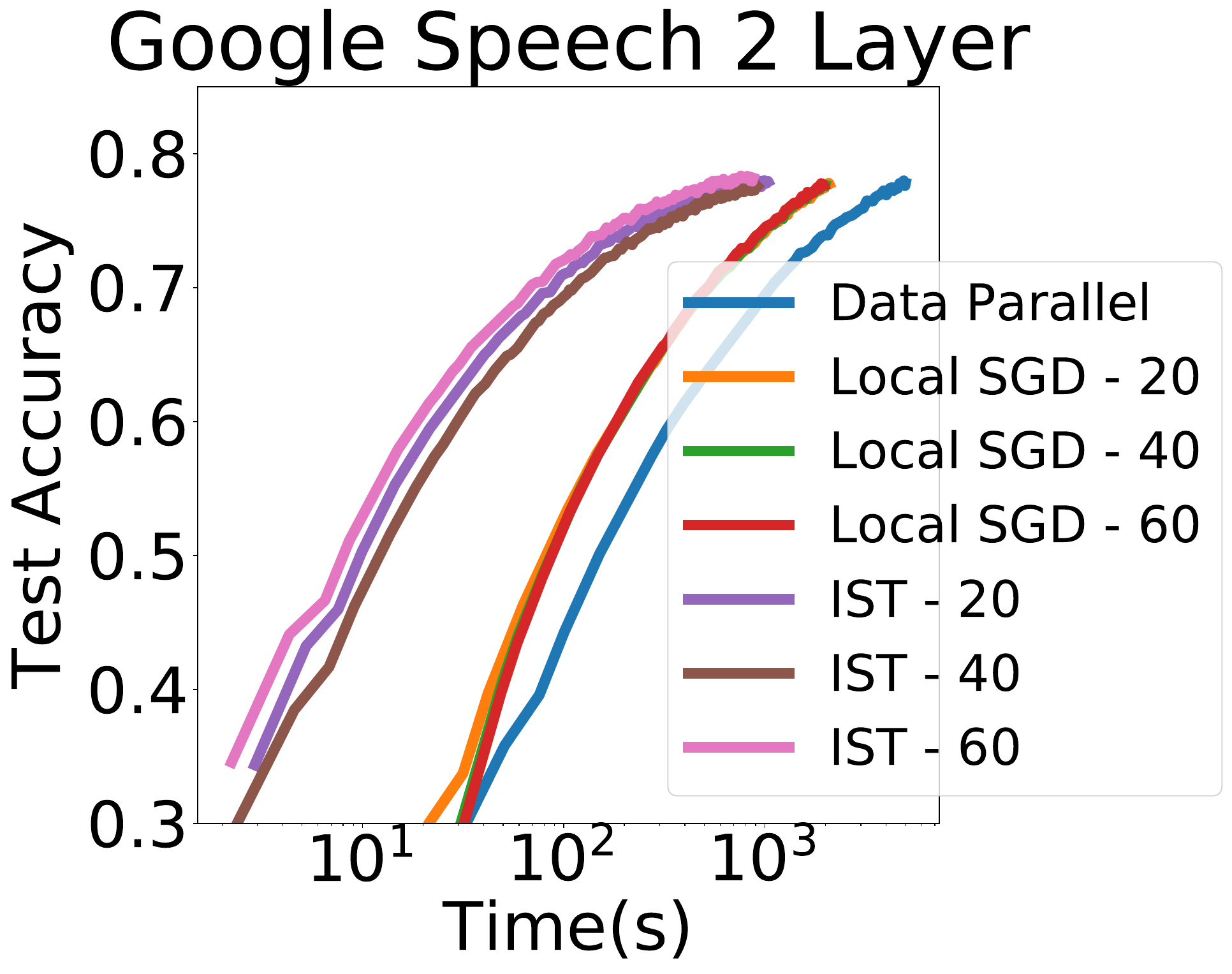}
    \end{subfigure}
    \begin{subfigure}
        \centering
        \includegraphics[width=0.24\textwidth]{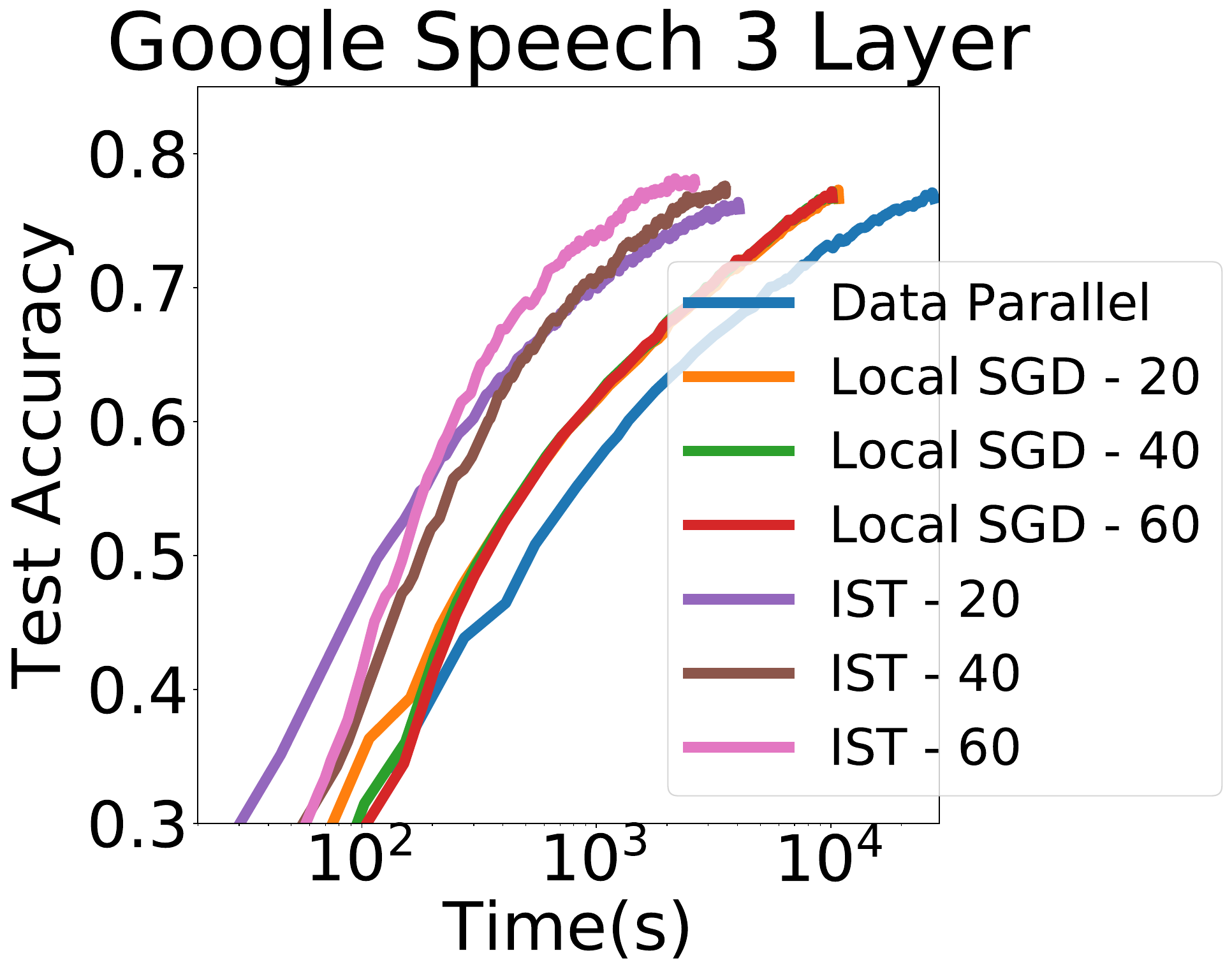}
    \end{subfigure}
    \begin{subfigure}
        \centering
        \includegraphics[width=0.24\textwidth]{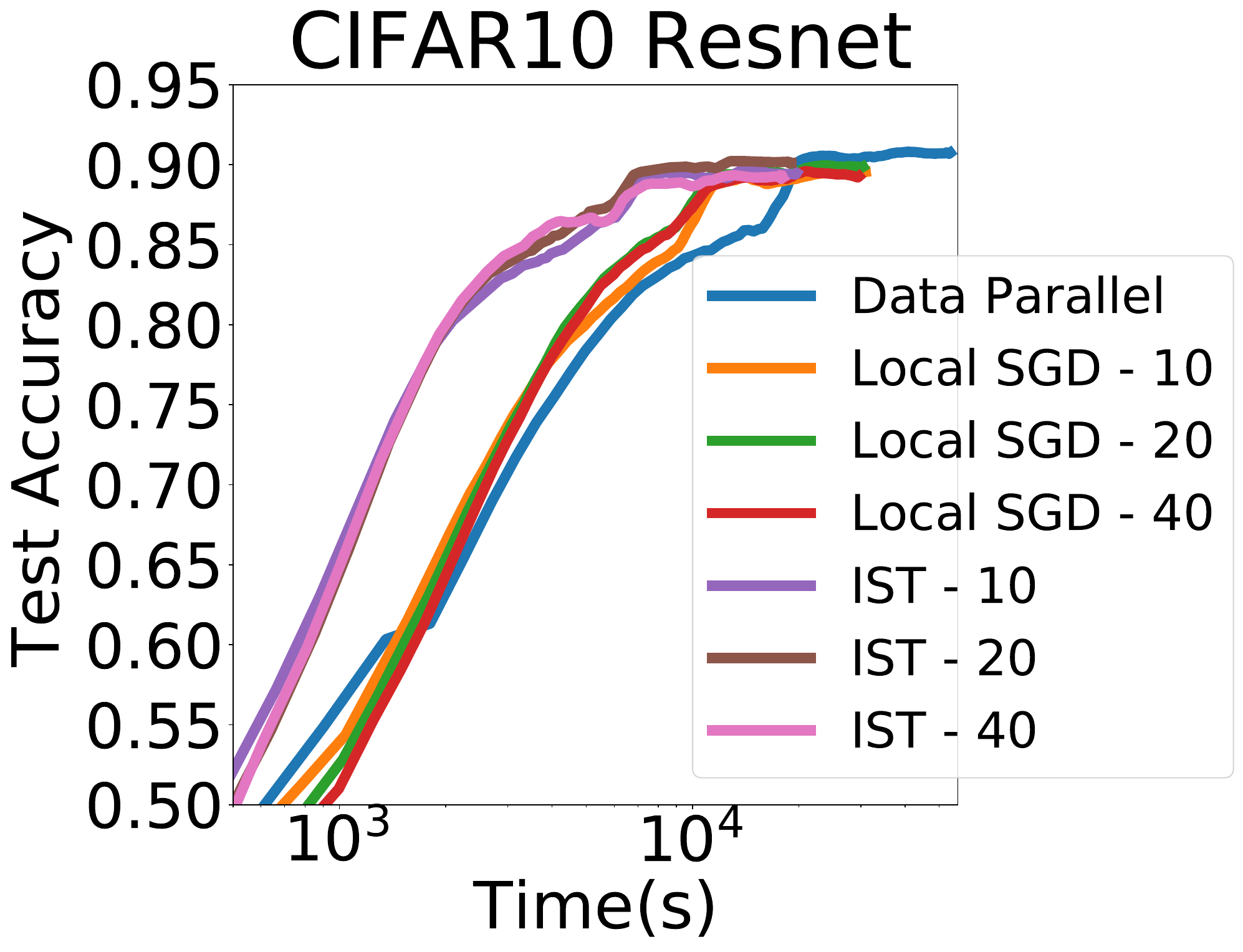}
    \end{subfigure}
    \begin{subfigure}
        \centering
        \includegraphics[width=0.24\textwidth]{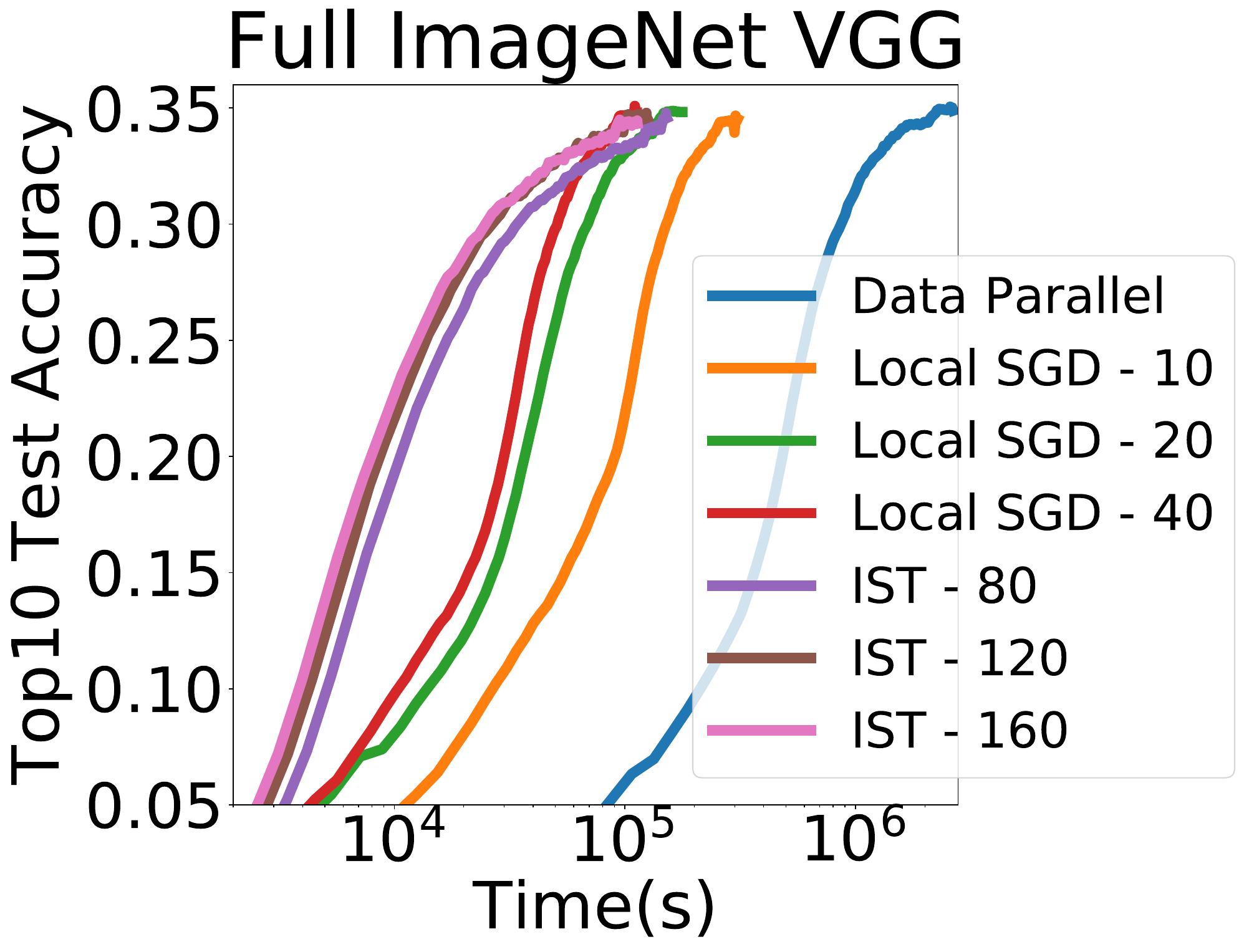}
    \end{subfigure} 
    \caption{Test accuracy vs. time. 2-/3- layer Google speech models are trained using an 8-CPU cluster; Resnet18 on CIFAR10 is trained using 4-CPU cluster; VGG12 on full ImageNet is trained using a 8-GPU cluster. The number after local SGD or IST legend represents the local update iterations.}
    \label{exp_convergence}
\end{figure*}

\noindent
\textbf{Distributed Implementation Notes.} 
We implement a distributed parameter server for IST in PyTorch.
We compare IST to the PyTorch implementation of data parallel learning. 
We also adapt the PyTorch data parallel learning to realize local SGD \cite{lin2018don} and ensemble learning, where learning occurs locally for a number of iterations before synchronizing.
For ensemble learning, this synchronization only occurs once at the end of training. 

For the CPU experiments, we use PyTorch's \texttt{gloo} backend. 
For the GPU experiments, data parallel learning and local SGD use PyTorch's \texttt{nccl} backend, which leverages the most advanced Nvidia collective communication library (the set of high-performance multi-GPU and multi-node collective communication primitives optimized for NVIDIA GPUs). 
\texttt{Nccl} implements ring-based all-reduce \cite{xu2018nccl}, which is used in well-known distributed learning systems such as Horovod \cite{sergeev2018horovod}. 

IST cannot use the \texttt{nccl} backend because it does not support the \texttt{scatter} operator required to implement IST. 
As a result, IST must use the \texttt{gloo} backend (meant for CPU-based learning), which is a serious handicap but does not reflect any intrinsic flaw of the method---high-performance GPU libraries simply lack support for required operations.

\subsection{Results and Analysis} \label{S:analysis}
\noindent
\textbf{Convergence speed.} While IST can process data quickly (i.e., due to previously-described improvements in communication efficiency),  there are questions regarding its statistical efficiency and generalization performance in comparison to baseline methods.
Figure \ref{exp_convergence} plots the hold-out test accuracy for selected benchmarks as a function of time, while Table \ref{tab:converge} shows the training time required for IST and relevant baselines to reach specified levels of hold-out test accuracy.

Our results generally indicate that IST achieves high accuracy on the test set much faster than other frameworks.
For example, in reaching an accuracy of 77\% with a 2-layer, fully-connected network, IST exhibits a $4.2 \times$ speedup compared to local SGD and a $10.6 \times$ speedup compared to data parallel.
Similarly, IST exhibits a $6.1 \times$ speedup compared to local SGD and a $16.6 \times$ speedup compared to data parallel in reaching the same accuracy with a 3-layer model.  
Note that the above improvements were observed even though IST was handicapped by its use of the \texttt{gloo} backend for its GPU implementation. 

Because CPUs were used for training on CIFAR10, the network was less of a bottleneck and all methods were able to scale, thus slightly negating the advantages of IST. 
Despite reaching 90\% accuracy slower on an 8-CPU cluster, however, IST was still the fastest to reach 85\% accuracy.
Furthermore, for the full ImageNet data set, the communication bottleneck using AWS is so severe that the smaller clusters were always faster. 
At each cluster size, IST was still the fastest option.

\begin{table}[H]
    \centering
    \begin{small}
    \begin{tabular}{clclclcl}
        \toprule
        & & Data Parallel & & Local SGD & & IST \\
        \cmidrule{3-3} \cmidrule{5-5} \cmidrule{7-7}
        Speech 2 layer & & 0.7938 & & 0.7998 & & \textcolor{blue}{\textbf{0.8153}} \\
        Speech 3 layer & & 0.7950 & & 0.7992 & & \textcolor{blue}{\textbf{0.8327}} \\
        CIFAR10 & & \textcolor{blue}{\textbf{0.9128}} & & 0.9087 & & 0.9102 \\   
        Full Imagenet & & 0.3688 & & 0.3685 & & \textcolor{blue}{\textbf{0.3802}} \\
        \bottomrule
    \end{tabular}
    \end{small}
    \caption {Final accuracy on each benchmark.}
    \label{tab:accuracy} 
\end{table}
\begin{table}[]
    \centering
    \begin{small}
    \begin{tabular}{cccccc}
    \toprule
       \multirow{2}{*}{Compute Nodes} &\multicolumn{2}{c}{2 Layer} && \multicolumn{2}{c}{3 Layer}\\
       \cmidrule{2-3} \cmidrule{5-6}
        & IST & Ensemble && IST & Ensemble \\
        \midrule
        2 Node & 0.82 & 0.82 && 0.84 & 0.74 \\
        4 Node & 0.79 & 0.80 && 0.82 & 0.71 \\
        8 Node & 0.76 & 0.77 && 0.78 & 0.70 \\\bottomrule
    \end{tabular}
    \caption{Test accuracy on Google Speech Commands.} 
    \label{tab:ensemble} 
    \end{small}
\end{table}

\noindent \textbf{Trained model accuracy.}
In Table \ref{tab:accuracy} we give the final accuracy of each method, trained on a 2-node cluster in various experimental settings.
As can be seen, despite partitioning the full networks into several independently-trained subnets, IST achieves better final accuracy in comparison to data parallel and local SGD training on all datasets except for CIFAR10.
On the CIFAR10 dataset, IST achieves test accuracy 0.26\% lower than data parallel training, but outpeforms local SGD.
Furthermore, IST continues to achieve high final accuracy as the number of compute nodes is increased, as shown in Table \ref{tab:ensemble}.
Thus, IST achieves highly-comparable or improved final accuracy in comparison to local SGD, data parallel, and ensemble-based training in all settings, \emph{revealing that the partitioning strategy of IST does not deteriorate the network's ability to match or exceed the accuracy achieved by baseline methodologies.}

As previously mentioned, IST also enables models to be trained that are too large to be handled by a single device.
In cases of mandatory distribution, such a property is useful for training sufficiently large models despite limited memory on individual compute nodes.
To demonstrate the utility of this property of IST, we study the relationship between embedding dimension and test accuracy for fully-connected models trained on the Amazon-670K recommendation task in an 8-GPU cluster.
As shown in Table \ref{tab:extreme}, IST is able to scale to larger model sizes without exceeding the memory capacity of individual nodes.
Such scalability enables a $>15\%$ test accuracy improvement in comparison to data parallel training, thus demonstrating that IST allows models with sufficient capacity to be trained despite the restricted memory of each device.


\noindent 
\textbf{Comparison to Ensemble Learning.}
IST intermittently aggregates subnet parameters and re-samples a new group of subnets for independent training. 
Although ensemble learning trivially improves communication efficiency and wall-clock training time (i.e., due to utilizing fewer synchronizations), re-sampling is necessary for achieving high network performance.
To show this, we perform tests with ensemble learning---i.e., training a group of subnet-sized models independently and aggregating their parameters once at the end of training---and IST on the Google Speech Commands dataset; see Table \ref{tab:ensemble}.

Ensemble learning and IST perform similarly for two-layer networks \ref{S:add_consider}.
Such comparable performance is expected because $i)$ two-layer networks are separable \cite{chizat2018global} (i.e., each hidden neuron contributes independently to network output without inter-neuron interaction) and $ii)$ all parameters within the one-hidden-layer NN are partitioned to some subnet. 
In such a simplified case, ensemble learning is able to performing well by independently learning meaningful neuron representations that can be aggregated into the global network. 

For deeper networks, no re-sampling during training leads numerous network parameters to be excluded from the learning process and allows random effects to accumulate throughout training, thus drastically deteriorating ensemble learning performance.
As such, IST significantly outperforms ensemble learning with three-layer networks (e.g., 10\% absolute improvement with $n=2$ compute nodes), revealing that IST has a significant performance advantage relative to ensemble learning for complex, multi-layer network architectures. 
Thus, although ensemble learning is faster to complete a fixed number of training epochs, it cannot yield comparable performance to networks trained with IST. 

\noindent
\textbf{Discussion.} There are a few takeaways from the experimental results. First, as expected, IST is able to process far more data in a short amount of time than the other distributed training algorithm.
Interestingly, we find that the IST speedups in CPU clusters are more significant than that in GPU clusters. There are two reasons for this. First, for GPU clusters, IST suffers from its use of PyTorch's \texttt{gloo} backend, compared to the \texttt{all-reduce} operator provided by \texttt{nccl}. Second, since the GPU provides a very high level of computation, there is less benefit to be realized from the reduction in FLOPS per gradient step using IST (as the GPU does not appear to be compute bound).

It is interesting that \emph{some of the frameworks actually do worse with additional machines, especially with a fast GPU}. This illustrates a significant problem with distributed learning. Unless a super-fast interconnect is used (and such interconnects are not available from typical cloud providers), it can actually be \emph{detrimental} to add additional machines, as the added cost of transferring data can actually result in \emph{slower} running times.  We see this clearly in Table \ref{tab:converge}, where the state-of-the-art PyTorch data parallel implementation (and the local SGD variant) does significantly \emph{worse} with more machines. IST shows the best potential to utilize additional machines without actually becoming much slower or slower to reach high accuracy.  

Finally, we note that various compression schemes can be used to increase the bandwidth of the interconnect (e.g., gradient sparsification \cite{aji2017sparse}, quantization \cite{alistarh2017qsgd}, sketching \cite{ivkin2019communication}, and low-rank compression \cite{vogels2019powersgd}).  However, these methods could be used with \emph{any} framework---including IST. 
We conjecture that while compression may allow effective scaling to larger clusters, it would not affect the efficacy of IST.

\begin{table}[H]
    \centering
    \begin{footnotesize}
    \begin{tabular}{ccccccccccccc}
        \toprule
        & & \multicolumn{5}{c}{Data Parallel} & &  \multicolumn{5}{c}{IST} \\ 
        \cmidrule{3-7} \cmidrule{9-13}
        Dim. & & @1 & & @3 & & @5  & & @1 & & @3 & & @5    \\
        \cmidrule{1-1} \cmidrule{3-3} \cmidrule{5-5} \cmidrule{7-7} \cmidrule{9-9} \cmidrule{11-11} \cmidrule{13-13}
        512 & & 0.386 & & 0.345 & & 0.316 & & 0.396 & & 0.360 & & 0.331 \\
        1024 & & \multicolumn{5}{c}{Fail to handle} & & 0.409 & & 0.369 & & 0.339 \\
        1536 & & \multicolumn{5}{c}{Fail to handle} & & 0.432 & & 0.391 & & 0.361 \\ 
        2048 & & \multicolumn{5}{c}{Fail to handle} & & 0.437 & & 0.394 & & 0.364 \\ 
        2560 & & \multicolumn{5}{c}{Fail to handle} & & \textcolor{blue}{\textbf{0.438}} & & \textcolor{blue}{\textbf{0.394}} & & \textcolor{blue}{\textbf{0.366}} \\
        \bottomrule
    \end{tabular}
     \end{footnotesize}
    \caption {Precision @1, @3, @5 on Amazon 670k.} \label{tab:extreme}
\end{table}

\section{Related work}{\label{sec:related}}

Data parallelism often suffers from the high bandwidth costs to communicate gradient updates between workers. 
Quantized SGD \cite{alistarh2017qsgd, courbariaux2015binaryconnect, seide2014bit, dettmers2015bit, gupta2015deep, hubara2017quantized, wen2017terngrad} and sparsified SGD \cite{aji2017sparse} both address this.
Quantized SGD uses lossy compression to quantize the gradients. Sparsified SGD reduces the exchange overhead by transmitting the gradients with maximal magnitude. Such methods are orthogonal to IST, and could be used in combination with it.

Recently, there has been a series of papers on using parallelism to \emph{``Solve the ${\tt YY}$ learning problem in ${\tt XX}$ minutes''}, for ever-decreasing values of ${\tt XX}$ \cite{goyal2017accurate, yadan2013multi, you2017scaling, smith2017don, codreanu2017scale, you2019reducing, you2019large}. Often these methods employ large batches.
It is generally accepted---though still debated \cite{dinh2017sharp}---that large batch training converges to ``sharp minima'', hurting generalization \cite{keskar2016large, yao2018hessian, defazio2018ineffectiveness}.
Further, achieving such results seems to require teams of PhDs utilizing special-purpose hardware: there is no approach that generalizes well without extensive trial-and-error.

Distributed local SGD \cite{mcdonald2009efficient, zinkevich2010parallelized, zhang2014dimmwitted, zhang2016parallel} updates the parameters, through averaging, only after several local steps are performed per compute node. 
This reduces synchronization and thus allows for higher hardware efficiency \cite{zhang2016parallel}.
IST uses a similar approach but makes the local SGD and each synchronization round less expensive.
Recent approaches \cite{lin2018don} propose less frequent synchronization towards the end of the training, but they cannot avoid it at the beginning.

Finally, asynchrony avoids SGD synchronization cost \cite{recht2011hogwild, dean2012large, paine2013gpu, zhang2013asynchronous}.
It has been used in distributed-memory systems, such as DistBelief \cite{dean2012large} and the Project Adam \cite{kingma2014adam}. 
While such systems, asymptotically, show nice convergence rate guarantees, there seems to be growing agreement that \emph{unconstrained} asynchrony does not always work well \cite{chen2016revisiting}, and it seems to be losing favor in practice.

Overall, the goal of such distributed training methods is to lower the wall-clock time-to-convergence with the addition of extra hardware.
As such, empirical analysis of these methods is often conducted using state-of-the-art computing hardware with high-bandwidth interconnects.
Even with access to such an ideal environment, however, data parallel approaches struggle to scale.
In particular, per-node compute requirements are reduced while synchronization costs remain constant or increase, leading to cases where the addition of more nodes makes training \emph{slower} as communication costs begin to dominate the training procedure. 
This issue could theoretically be mitigated with the use of larger batches, but such an approach often degrades statistical efficiency and leads to poor generalization \cite{ma2019inefficiency, golmant2018computational, goyal2017accurate, yadan2013multi, you2017scaling, smith2017don, codreanu2017scale, you2019reducing, you2019large}. 

\section{Conclusion}{\label{sec:conclusion}}

In this work, we propose \emph{independent subnet training} for distributed training of neural networks. 
By stochastically partitioning the model into non-overlapping subnets, 
IST reduces the communication overhead for model synchronization, and the computation workload of forward-backward propagation for a thinner model on each worker. This results in two advances: $i)$ IST significantly accelerates the training process comparing with standard data parallel approaches for distributed learning, and $ii)$ IST scales to large models that cannot be learned using standard data parallel approaches.

\appendix

\section{IST is an Unbiased Estimator} \label{A:unbiased}
We formalize subnet construction in IST with a set of neuron membership indicators $m^{(s)}_{l,i} \in \{0, 1\}$ at each layer $l$ where $s$ ranges over the $n$ compute nodes and $i$ ranges over all the neurons in layer $l$. 
$m^{(s)}$ contains a binary mask for subnet $s$ across all layers and neurons.
For each entry $m^{(s)}_{l, i}$, a value of $\{0, 1\}$ is assigned with marginal probability $\mathbb{P}[m^{(s)}_{l,i} = 1] = \frac{1}{n}$ to exactly one of the $n$ subnets, implying that $\sum_s m^{(s)}_{l,i} = 1$ (i.e., neurons are partitioned to exactly one subnet) and $\mathbb{E}[m^{(s)}_{l,i'} m^{(s)}_{l-1,i}] = \frac{1}{n^2}$ (i.e., sampling is independent at each layer).

Using these constructions, we can define the forward pass of subnet $s$ at layer $l$ as
\begin{equation}
\hat{f}^{(s)}_l = n^2 \left( m^{(s)}_l \odot \left (W_{l} \left( m^{(s)}_{l-1} \odot \Bar{f}_{l-1}\right) \right) \right)\label{eqn-recur} 
\end{equation}
where $\odot$ denotes the Hadamard product, $W_{l}$ is the weight matrix between layers $l-1$ and $l$, and $\Bar{f}^{(s)}_l = \mathcal{S}(\hat{f}^{(s)}_l)$ (i.e., $\hat{\cdot}$ and $\Bar{\cdot}$ denote representations before and after the non-linear activation function $\mathcal{S}$).
To gather the activations produced by each subnet into a single vector, we sum over subnet activations as $\hat{f}_l = \sum_s \hat{f}^{(s)}_l$.
The Hadamard products in \eqref{eqn-recur} mask out neuron activations---both in $\hat{f}^{(s)}_{l}$ and $\Bar{f}_{l-1}$---that are not relevant to subnet $s$.\footnote{In practice, such masking is not actually performed. Rather, we partition the weight matrix such that inactive neurons are never computed.}

Interestingly, if $\Bar{f}_{l-1}$ is an unbiased estimator of the full network output $f^\star_{l-1}$, then $\hat{f}_l$ is an unbiased estimator of $W_l f^\star_{l-1}$.
To show this, we consider the $j$th entry of
\begin{align*}
n^2 \sum_s m^{(s)}_l \odot \left (W_l \left( m^{(s)}_{l-1} \odot \Bar{f}_{l-1}\right) \right),
\end{align*}
for which the expectation can be written as 
\begin{small}
\begin{align} 
\mathbb{E}\Bigg[n^2 \sum_s \sum_i \sum_{i'} W_{l, j,i} m^{(s)}_{l,i'} m^{(s)}_{l-1,i} \Bar{f}_{l-1, i}\Bigg] \nonumber
&= n^2 \sum_s \sum_i \frac{1}{n^2} W_{l, j,i} \mathbb{E}\left[ \Bar{f}_{l-1, i}\right] \nonumber \\ 
&= \sum_i W^l_{l, j,i} f^\star_{l-1, i} \nonumber
\end{align}
\end{small}
which is precisely the $j$th entry in $W_l f^\star_{l-1}$.

\section{Convergence Guarantees for IST} \label{A:ist_conv}
 
We will discuss the convergence behavior of IST in this section; see Appendix \ref{A:proof} for a full proof of these results

Consider minimizing $\ell(w) = \tfrac{1}{n} \sum_{i = 1}^n \ell_i(w)$ as in Equation \ref{eq:loss}.
Our analysis adopts six assumptions, labeled \textsc{Assumption} 1 through \textsc{Assumption} 6.

\begin{assumption}{($L_i$-smoothness)}
Given component $\ell_i$ of $\ell$ function, there exists constant $L_i > 0$ such that for every $w_1, w_2 \in \mathbb{R}^p$ we have that:
\begin{align*}
\|\nabla \ell_i(w_1) - \nabla \ell_i(w_2)\|_2 \leq L_i \cdot \|w_1 - w_2\|_2
\end{align*}
or, equivalently,
\begin{align*}
\ell_i(w_2) \leq \ell_i(w_1) + \langle \nabla \ell_i(w_1), w_2 - w_1 \rangle + \tfrac{L_i}{2} \|w_1 - w_2\|_2^2.
\end{align*}
Further, define $L_{\max} := \max_i L_i$.
\end{assumption}

\begin{assumption}{($Q$-Lipschitz continuity)}
Given $\ell$ function, there exists constant $Q > 0$ such that for every $w_1, w_2 \in \mathbb{R}^p$ we have that:
\begin{align*}
|\ell(w_1) - \ell(w_2)| \leq Q \cdot \|w_1 - w_2\|_2
\end{align*}
or, equivalently,
$\|\nabla \ell(w) \|_2 \leq Q, \quad \forall w \in \mathbb{R}^p$.
\end{assumption}

\begin{assumption}{(Error Bound)}
Let $w^\star$ denote the global optimum of $\ell$.
Then, under the Error Bound assumption, there exists constant $\mu > 0$ such that for every $w \in \mathbb{R}^p$ we have that:
\begin{align*}
\|\nabla \ell(w)\|_2 \geq \mu \|w^\star - w\|_2
\end{align*}
\end{assumption}

Per \cite{karimi2016linear}, Error Bound $\equiv$ Polyak-{\L}ojasiewicz inequality.

\begin{assumption}{(Stochastic gradient variance)}{\label{assumption:1}}
such that
\begin{align*}
\mathbb{E}_{i_t} \left[ \|\nabla \ell_{i_t}(w)\|_2^2 \right] \leq M + M_f \|\nabla \ell(w)\|_2^2,
\end{align*}
where $\ell_{i_t}$ is a randomly selected component from the sum $\tfrac{1}{n} \sum_{i = 1}^n \ell_i(w)$.
\end{assumption}

Note that we make the distinction between the general indexing term $i$ and the randomly selected index per SGD round, $i_t$. 
We follow the problem formulation in \cite{khaled2019gradient} on compressed iterates, where IST performs the following motions:
\begin{itemize}[leftmargin=0.5cm]
\item Given model $w_t$ at iteration $t$, we generate a mask $\mathcal{M}: \mathbb{R}^p \rightarrow \mathbb{R}^p$ such that:
\begin{align*}
\left(\mathcal{M}(w_t)\right)_i = \begin{cases} \tfrac{w_{t, i}}{\xi}, & \text{with probability } \xi, \\ 0, & \text{with probability } 1 - \xi.\end{cases}
\end{align*}
Input and output neurons are always selected in this mask.
\item Given mask $\mathcal{M}(\cdot)$, we generate the subnetwork as in:
\begin{align*}
z_t \equiv \mathcal{M}(w_t) \in \mathbb{R}^p,
\end{align*}
where $z_t$---a \emph{compressed} version of $w_t$---has zeros at positions for deactivated subnetwork weights at iteration $t$ and non-zeros for active weights.
\item We perform gradient descent on the compressed $z_t$:
\begin{align*}
w_{t+1} = z_t - \eta \nabla \ell_{i_t}(z_t),
\end{align*}
with $\eta$ the learning rate and $i_t$ randomly selected from $[n]$.
\end{itemize}
This setting resembles \emph{gradient descent with compressed iterates} (GDCI) \cite{khaled2019gradient}, but our analysis considers a different function class.
Our final two assumptions are on $\mathcal{M}(\cdot)$ with respect to the gradient of $\ell$.

\begin{assumption}{(Additive gradient error assumption with bounded energy)}{\label{ass:norm_condition_0}}
Let $w_t$ be the current model and let $z_t = \mathcal{M}(w_t)$ be the compressed model. 
Consider the stochastic gradient term $\nabla \ell_{i_t}(z_t)$; we assume that, on expectation, the following holds:
\begin{align*}
\mathbb{E}_{\mathcal{M}, i_t}&\left[ \nabla \ell_{i_t}(z_t) ~|~ w_t \right] = \nabla \ell(w_t) + \varepsilon_t, \quad 
\end{align*}
for $ \varepsilon_t \in \mathbb{R}^p$ such that $\|\varepsilon_t\|_2 \leq B$ for $ B > 0$.
\end{assumption}

\begin{assumption}{(Norm condition)}{\label{ass:norm_condition}}
$\exists~\theta \in [0, 1)$ such that: 
\begin{align*}
\left\| \mathbb{E}_{\mathcal{M}, i_t}\left[ \nabla \ell_{i_t}(z_t) ~|~ w_t \right] - \nabla \ell(w_t) \right\|_2 = \|\varepsilon_t\|_2 \leq \theta \|\nabla \ell(w_t)\|_2,
\end{align*}
where $w_t$ and $z_t$ are current and compressed models, respectively.
\end{assumption}

The above assumption is commonly used in derivative free optimization \cite{carter1991global, berahas2019theoretical}.
We are now able to derive the following theorem and corollary, which imply the convergence of IST.

\begin{theorem}{\label{thm:1}}
Let $\ell(w) := \tfrac{1}{n} \sum_{i = 1}^n \ell_i(w)$ have $L_i$-smooth components $\ell_i$ for $L_{\max} := \max_i L_i$, and consider the following recursion:
\begin{align*}
w_{t+1} = z_t - \eta \nabla \ell_{i_t}(z_t), \quad \text{where } z_t = \mathcal{M}(w_t).
\end{align*}

Suppose $\mathcal{M}(w_t)$ and $\ell$ satisfy Assumption \ref{ass:norm_condition_0}.
After $T$ iterations for step size $\eta = \tfrac{1}{2L_{\max}}$, we obtain:
\begin{align*}
&\min_{t \in \{0, \dots, T\}} \mathbb{E}_{\mathcal{M}, i_t}\left[\|\nabla \ell(w_t)\|_2^2\right] \\
\leq& \tfrac{\ell(x_0) - \ell(w^\star)}{\alpha (T+1)} + \tfrac{1}{\alpha} \cdot \left(\tfrac{BQ}{2L_{\max}} + \tfrac{5L_{\max} \omega}{2}\cdot \|w^\star\|_2^2 + \tfrac{M}{4L_{\max}} \right)
\end{align*}
where the expectation is over the random selection on the compression operator $\mathcal{M}(\cdot)$ and the stochastic selection $i_t$, $\alpha = \tfrac{1}{2L_{\max}}\left(1 - \tfrac{M_f}{2}\right) - \tfrac{5\omega L_{\max}}{2\mu^2}$, and $\omega = \tfrac{1 - \xi}{\xi} < \tfrac{\mu^2}{10 L_{\max}^2}$.
\end{theorem}

If we exchange the bounded assumption $\|\varepsilon_t\|_2 \leq B$ and $Q$-Lipschitzness, with the norm condition in Assumption \ref{ass:norm_condition}, we obtain the following corollary.
\begin{corollary}
Let $\ell$ be $L$-smooth, and consider the recursion over compressed iterates:
\begin{align*}
w_{t+1} = z_t - \eta \nabla \ell_{i_t}(z_t), \quad \text{where } z_t = \mathcal{M}(w_t).
\end{align*}
We further assume that the operator mask, along with $\ell$, satisfy the norm condition Assumption \ref{ass:norm_condition} with parameter $\theta \in [0, 1)$. 
Then, after running the above recursion for $T$ iterations for step size $\eta = \tfrac{1}{2L_{\max}}$, we obtain:
\begin{align*}
&\min_{t \in \{0, \dots, T\}} \mathbb{E}_{\mathcal{M}, i_t}\left[\|\nabla \ell(w_t)\|_2^2\right] \\
\leq &\tfrac{\ell(w_0) - \ell(w^\star)}{\alpha (T+1)} + \tfrac{1}{\alpha} \cdot \left(\tfrac{5L_{\max}\omega}{2}\cdot \|w^\star\|_2^2 + \tfrac{M}{4L_{\max}} \right)
\end{align*}
where the expectation is over the random selection on the compression operator $\mathcal{M}(\cdot)$ and $\alpha$ and $\omega$ are expressed as:
\begin{align*}
\alpha &= \tfrac{1}{2L_{\max}}\left(\tfrac{1}{2} - \theta - \tfrac{M_f}{2}\right) - \tfrac{5L_{\max}}{2} \cdot \tfrac{\omega}{\mu^2} \\
\omega &= \tfrac{1 - \xi}{\xi} < \tfrac{\mu^2}{5L_{\max}^2\left(\tfrac{1}{2} -\theta - \tfrac{M_f}{2}\right)}
\end{align*}
\end{corollary}

\section{Proof of Convergence Guarantees for IST} \label{A:proof}
 
We will discuss the convergence behavior of IST in this section.

Consider the problem of minimizing an average of loss functions:
\begin{align*}
x^\star \in \arg\min_{x \in \mathbb{R}^p} \left\{ f(x) := \tfrac{1}{n} \sum_{i = 1}^n f_i(x) \right\}.
\end{align*}
Here, $f_i(\cdot)$ denotes the contribution to the loss of the $i$-th data point.
The components $f_i(\cdot)$ define the nature of the full function $f$: if $f_i$'s are quadratics, we obtain the convex linear regression problem; if $f_i$'s model the forward pass of a neural network, we obtain neural network inference. 

In this note, we will consider $f_i$ functions that do not follow exactly the architecture of a specific neural network, but satisfy general loss assumptions that could potentially be satisfied by neural network models.

\begin{assumption}{($L_i$-smoothness)}
Given component $f_i$ of $f$ function, there exists constant $L_i > 0$ such that for every $x, y \in \mathbb{R}^p$ we have that:
\begin{align*}
\|\nabla f_i(x) - \nabla f_i(y)\|_2 \leq L_i \cdot \|x - y\|_2
\end{align*}
or, equivalently,
\begin{align*}
f_i(y) \leq f_i(x) + \langle \nabla f_i(x), y - x \rangle + \tfrac{L_i}{2} \|x - y\|_2^2.
\end{align*}
Further, define $L_{\max} := \max_i L_i$.
\end{assumption}

Another assumption that we use in part of our results is $Q$-Lipschitz assumption:
\begin{assumption}{($Q$-Lipschitz continuity)}
Given $f$ function, there exists constant $Q > 0$ such that for every $x, y \in \mathbb{R}^p$ we have that:
\begin{align*}
|f(x) - f(y)| \leq Q \cdot \|x - y\|_2
\end{align*}
or, equivalently,
\begin{align*}
\|\nabla f(x) \|_2 \leq Q, \quad \forall x \in \mathbb{R}^p.
\end{align*}
\end{assumption}

Two other assumptions for $f$ functions that do not imply convexity but help as proof convergence rate techniques are:
\begin{assumption}{(Error Bound)}
Let $x^\star$ denote the global optimum of $f$.
Then, under the Error Bound assumption, there exists constant $\mu > 0$ such that for every $x\in \mathbb{R}^p$ we have that:
\begin{align*}
\|\nabla f(x)\|_2 \geq \mu \|x^\star - x\|_2
\end{align*}
\end{assumption}

Per \cite{karimi2016linear}, Error Bound $\equiv$ Polyak-{\L}ojasiewicz inequality.

Regarding stochasticity in gradient descent, we will use the following general assumptions on the boundedness of stochastic gradient variance.
\begin{assumption}{(Stochastic gradient variance)}{\label{A:assumption:1}}
We assume that there are constants $M, M_f > 0$,
such that
\begin{align*}
\mathbb{E}_{i_t} \left[ \|\nabla f_{i_t}(x)\|_2^2 \right] \leq M + M_f \|\nabla f(x)\|_2^2,
\end{align*}
where $f_{i_t}$ is a randomly selected component from the sum $\tfrac{1}{n} \sum_{i = 1}^n f_i(x)$.
\end{assumption}

For the rest of the text, we make the distinction between the general indexing term $i$ and the randomly selected index per SGD round, $i_t$. The differences are clear from the context. 

\subsection{Compressed iterates}
We follow the problem formulation in \cite{khaled2019gradient} on compressed iterates.
Working on IST in a sequentially centralized fashion, IST performs the following motions:
\begin{itemize}
\item Given current model $x_t$ at iteration $t$, we generate a mask $\mathcal{M}: \mathbb{R}^p \rightarrow \mathbb{R}^p$ such that:
\begin{align*}
\left(\mathcal{M}(x_t)\right)_i = \begin{cases} \tfrac{x_{t, i}}{\xi}, & \text{with probability } \xi, \\ 0, & \text{with probability } 1 - \xi.\end{cases}
\end{align*}
This mask deviates from the proposed model in the sense that the input and output neurons in the neural network are always selected. 
\item Given mask $\mathcal{M}(\cdot)$, we generate the subnetwork as in:
\begin{align*}
y_t \equiv \mathcal{M}(x_t) \in \mathbb{R}^p,
\end{align*}
where $y_t$ has zeros at the positions where neurons are deactivated for this subnetwork at iteration $t$ and non-zeros for the active weights. I.e., $y_t$ constitutes a \emph{compressed} version of the full model $x_t$.
\item We perform gradient descent on the compressed $y_t$ as in:
\begin{align*}
x_{t+1} = y_t - \eta \nabla f_{i_t}(y_t),
\end{align*}
for $\eta$ being the learning rate, and $i_t$ being selected randomly from $[n]$.
\end{itemize}
The above setting resembles that of \emph{gradient descent with compressed iterates} (GDCI) in \cite{khaled2019gradient}. 
Our analysis differentiates in that we consider a different function class. 

\textit{Compression operators $\mathcal{M}(\cdot)$}
Let us describe and prove some properties of the compression operator $\mathcal{M}(\cdot)$.

\begin{property}{(Unbiasedness of $\mathcal{M}(\cdot)$)}{\label{prop:1}}
The operator $\mathcal{M}: \mathbb{R}^p \rightarrow \mathbb{R}^p$ as defined above is unbiased; i.e., 
\begin{align*}
\mathbb{E}_{\mathcal{M}}\left[ \mathcal{M}(x) ~|~ x \right] = x, \quad \forall x \in \mathbb{R}^p
\end{align*}
\end{property}

\begin{proof}
To see this, we compute:
\begin{align*}
\mathbb{E}_{\mathcal{M}}\left[ \mathcal{M}(x) ~|~ x \right] = & \begin{bmatrix} \mathbb{E}_{\mathcal{M}}\left[ \left(\mathcal{M}(x)\right)_1 ~|~ x \right] \\ \mathbb{E}_{\mathcal{M}}\left[ \left(\mathcal{M}(x)\right)_2 ~|~ x \right] \\ \vdots \\ \mathbb{E}_{\mathcal{M}}\left[ \left(\mathcal{M}(x)\right)_p ~|~ x \right] \end{bmatrix} \\
=& \begin{bmatrix} \xi \cdot \tfrac{x_1}{\xi} + (1 - \xi) \cdot 0 \\ \xi \cdot \tfrac{x_2}{\xi} + (1 - \xi) \cdot 0 \\ \vdots \\ \xi \cdot \tfrac{x_p}{\xi} + (1 - \xi) \cdot 0 \end{bmatrix} = \begin{bmatrix} x_1 \\ x_2 \\ \vdots \\ x_p \end{bmatrix} = x
\end{align*}
\end{proof}

\begin{property}{(Bounded variance of $\mathcal{M}(\cdot)$)}{\label{prop:2}}
The operator $\mathcal{M}: \mathbb{R}^p \rightarrow \mathbb{R}^p$ has bounded variance as in: 
\begin{align*}
\mathbb{E}_{\mathcal{M}}\left[ \|\mathcal{M}(x) - x \|_2^2 ~|~ x \right] = \tfrac{1 - \xi}{\xi} \|x\|_2^2, \quad \forall x \in \mathbb{R}^p
\end{align*}
\end{property}

\begin{proof}
Let us first expand the squared term:
\begin{align*}
&\mathbb{E}_{\mathcal{M}}\left[ \|\mathcal{M}(x) - x \|_2^2 ~|~ x \right] \\
= &\mathbb{E}_{\mathcal{M}}\left[ \|\mathcal{M}(x)\|_2^2 + \|x\|_2^2 - 2\langle \mathcal{M}(x), x \rangle ~|~ x \right] \\ 
= &\mathbb{E}_{\mathcal{M}}\left[ \|\mathcal{M}(x)\|_2^2 ~|~ x \right] + \|x\|_2^2 - 2\langle \mathbb{E}_{\mathcal{M}}\left[\mathcal{M}(x) ~|~ x \right], x \rangle \\ 
= &\mathbb{E}_{\mathcal{M}}\left[ \|\mathcal{M}(x)\|_2^2 ~|~ x \right] - \|x\|_2^2
\end{align*}
Focusing on the first term on the right hand side:
\begin{align*}
&\mathbb{E}_{\mathcal{M}}\left[ \|\mathcal{M}(x)\|_2^2 ~|~ x \right] \\
=&\mathbb{E}_{\mathcal{M}}\left[ \sum_{i = 1}^p \left(\mathcal{M}(x)\right)_i^2 ~|~ x \right] 
= \sum_{i = 1}^p \mathbb{E}_{\mathcal{M}}\left[ \left(\mathcal{M}(x)\right)_i^2 ~|~ x \right] \\
=& \sum_{i = 1}^p \left(\xi \cdot \tfrac{x_i^2}{\xi^2} + (1- \xi) \cdot 0 \right) = \tfrac{1}{\xi} \sum_{i = 1}^p x_i^2 = \tfrac{1}{\xi} \|x\|_2^2
\end{align*}
Combining the above we get:
\begin{align*}
\mathbb{E}_{\mathcal{M}}\left[ \|\mathcal{M}(x) - x \|_2^2 ~|~ x \right] =  \tfrac{1}{\xi} \|x\|_2^2 - \|x\|_2^2 = \tfrac{1 - \xi}{\xi} \cdot \|x\|_2^2
\end{align*}
\end{proof}

Some assumptions we make for $\mathcal{M}(\cdot)$ with regard to the gradient of $f$ are as follows.
\begin{assumption}{(Additive gradient error assumption with bounded energy)}
Let $x_t$ be the current model and let $y_t = \mathcal{M}(x_t)$ be the compressed model. 
Consider the stochastic gradient term $\nabla f_{i_t}(y_t)$; we assume that, on expectation, the following additive noise assumption holds:
\begin{align*}
\mathbb{E}_{\mathcal{M}, i_t}\left[ \nabla f_{i_t}(y_t) ~|~ x_t \right] = \nabla f(x_t) + \varepsilon_t, \quad \\\text{for } \varepsilon_t \in \mathbb{R}^p \text{ such that } \|\varepsilon_t\|_2 \leq B \text{ for } B > 0.
\end{align*}
\end{assumption}

A different assumption that can be made for stochastic gradient $\nabla f_{i_t}(y_t)$ is the \emph{norm condition}, used in derivative free optimization \cite{carter1991global, berahas2019theoretical}:
\begin{assumption}{(Norm condition)}{\label{A:ass:norm_condition}}
There is a constant $\theta \in [0, 1)$ such that: 
\begin{align*}
\left\| \mathbb{E}_{\mathcal{M}, i_t}\left[ \nabla f_{i_t}(y_t) ~|~ x_t \right] - \nabla f(x_t) \right\|_2 = \|\varepsilon_t\|_2 \leq \theta \|\nabla f(x_t)\|_2,
\end{align*}
where $x_t$ is the current model and $y_t = \mathcal{M}(x_t)$ is the compressed model.
\end{assumption}

\subsection{Proof of sequential IST}
In this subsection, we will provide the backbone of our proof; later on we will make different assumptions that will lead to different final results. 
For this first part, we will only use the basic properties of the compression operator $\mathcal{M}(\cdot)$ and the $L$-smoothness of $f$.

We start with the following Lemma.
\begin{lemma}{\label{lemma:2}}
Let $y_t = \mathcal{M}(x_t)$. Then, for $x^\star$ the optimal point for $f$, it holds:
\begin{align*}
\mathbb{E}_{\mathcal{M}}\left[ \| y_t - x_t \|_2^2 ~|~ x_t\right] \leq \tfrac{2(1 - \xi)}{\xi} \|x_t - x^\star\|_2^2 + \tfrac{2(1-\xi)}{\xi} \|x^\star\|_2^2.
\end{align*}
\end{lemma}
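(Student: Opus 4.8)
The plan is to reduce the claim directly to Property~\ref{prop:2}, which already computes the variance of the compression operator exactly, and then split the resulting $\|x_t\|_2^2$ term around the optimum $x^\star$ using the elementary inequality $\|a + b\|_2^2 \leq 2\|a\|_2^2 + 2\|b\|_2^2$.

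Concretely, first I would invoke Property~\ref{prop:2} with $x = x_t$ to obtain the identity
\begin{align*}
\mathbb{E}_{\mathcal{M}}\left[ \| y_t - x_t \|_2^2 ~|~ x_t \right] = \tfrac{1 - \xi}{\xi} \|x_t\|_2^2,
\end{align*}
recalling that $y_t = \mathcal{M}(x_t)$ by definition. Next, I would write $x_t = (x_t - x^\star) + x^\star$ and apply the bound $\|a + b\|_2^2 \leq 2\|a\|_2^2 + 2\|b\|_2^2$ (a consequence of the parallelogram law / Young's inequality) to get $\|x_t\|_2^2 \leq 2\|x_t - x^\star\|_2^2 + 2\|x^\star\|_2^2$. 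Substituting this into the identity above yields exactly
\begin{align*}
\mathbb{E}_{\mathcal{M}}\left[ \| y_t - x_t \|_2^2 ~|~ x_t\right] \leq \tfrac{2(1 - \xi)}{\xi} \|x_t - x^\star\|_2^2 + \tfrac{2(1-\xi)}{\xi} \|x^\star\|_2^2,
\end{align*}
which is the claimed bound.

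There is essentially no obstacle here: the lemma is a one-line corollary of Property~\ref{prop:2} combined with a standard norm-splitting inequality. The only thing worth being careful about is that the conditioning is on $x_t$ (so $\|x_t\|_2^2$ and $\|x_t - x^\star\|_2^2$ are deterministic given $x_t$ and can be pulled outside the expectation), and that $\xi \in (0,1)$ so the factor $\tfrac{1-\xi}{\xi}$ is well-defined and nonnegative. The reason the bound is stated in this slightly inflated form — with a $\|x_t - x^\star\|_2^2$ term rather than just $\|x_t\|_2^2$ — is to make it directly usable in the subsequent recursion on the iterates, where one wants everything expressed relative to the optimum; so I would close the proof by noting this is the form needed downstream.
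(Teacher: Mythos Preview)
Your proposal is correct and matches the paper's proof essentially line for line: invoke Property~\ref{prop:2} to get $\mathbb{E}_{\mathcal{M}}[\|y_t - x_t\|_2^2 \mid x_t] = \tfrac{1-\xi}{\xi}\|x_t\|_2^2$, then apply $\|x_t\|_2^2 = \|x_t - x^\star + x^\star\|_2^2 \leq 2\|x_t - x^\star\|_2^2 + 2\|x^\star\|_2^2$. There is nothing to add.
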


\begin{proof}
By Property \ref{prop:2}, we know that:
\begin{align*}
\mathbb{E}_{\mathcal{M}}\left[ \|y_t - x_t \|_2^2 ~|~ x_t \right] = \tfrac{1 - \xi}{\xi} \|x_t\|_2^2.
\end{align*}
Combining this with the property that:
\begin{align*}
\|x_t\|_2^2 = \|x_t - x^\star + x^\star\|_2^2 \leq 2\|x_t - x^\star\|_2^2 + 2 \|x^\star\|_2^2,
\end{align*}
we get the reported result.
\end{proof}

Moreover, we also state the following Lemma, which is used in most proofs below.
\begin{lemma}{\label{lemma:3}}
Let $y_t = \mathcal{M}(x_t)$. Then,
\begin{align*}
&\mathbb{E}_{\mathcal{M}, i_t}\left[ \| y_t - x_t - \eta \nabla f_{i_t}(y_t) \|_2^2 ~|~ x_t\right] \\ \leq &\tfrac{4(1 + \eta^2 L_{\max}^2)(1-\xi)}{\xi} \left(\|x_t - x^\star\|_2^2 + \|x^\star\|_2^2 \right) + 2\eta^2 \mathbb{E}_{i_t}\left[\|\nabla f_{i_t}(x_t)\|_2^2 ~|~ x_t\right].
\end{align*}
\end{lemma}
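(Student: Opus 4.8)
The plan is to derive Lemma~\ref{lemma:3} directly from the two facts already available: the bound on the compression error in Lemma~\ref{lemma:2} (itself a consequence of Property~\ref{prop:2}) and the $L_{\max}$-smoothness of each component $f_i$. No convexity, no Lipschitz-continuity, and no gradient-error assumption on $\mathcal{M}(\cdot)$ are needed here; this is purely a ``splitting and bookkeeping'' lemma that will later be fed into the descent inequality for $\|x_{t+1}-x^\star\|_2^2$.

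First I would peel off the stochastic gradient from the compression displacement. Applying the elementary inequality $\|a+b\|_2^2 \le 2\|a\|_2^2 + 2\|b\|_2^2$ (equivalently, Young's inequality on the cross term $-2\eta\langle y_t-x_t,\nabla f_{i_t}(y_t)\rangle$) with $a = y_t-x_t$ and $b = -\eta\nabla f_{i_t}(y_t)$ gives
\[
\| y_t - x_t - \eta \nabla f_{i_t}(y_t) \|_2^2 \le 2\|y_t-x_t\|_2^2 + 2\eta^2\|\nabla f_{i_t}(y_t)\|_2^2 .
\]
Next I would transfer the gradient from the compressed iterate $y_t$ to the full iterate $x_t$, since Assumption~\ref{assumption:1} controls $\mathbb{E}_{i_t}[\|\nabla f_{i_t}(x_t)\|_2^2]$, not $\mathbb{E}[\|\nabla f_{i_t}(y_t)\|_2^2]$. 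Writing $\nabla f_{i_t}(y_t) = \nabla f_{i_t}(x_t) + \big(\nabla f_{i_t}(y_t)-\nabla f_{i_t}(x_t)\big)$, using the same $2\|a\|_2^2+2\|b\|_2^2$ split, and bounding $\|\nabla f_{i_t}(y_t)-\nabla f_{i_t}(x_t)\|_2 \le L_{i_t}\|y_t-x_t\|_2 \le L_{\max}\|y_t-x_t\|_2$ yields $\|\nabla f_{i_t}(y_t)\|_2^2 \le 2\|\nabla f_{i_t}(x_t)\|_2^2 + 2L_{\max}^2\|y_t-x_t\|_2^2$. Substituting back collapses everything into a term proportional to $(1+\eta^2 L_{\max}^2)\|y_t-x_t\|_2^2$ plus a term proportional to $\eta^2\|\nabla f_{i_t}(x_t)\|_2^2$; then take the conditional expectation $\mathbb{E}_{\mathcal{M},i_t}[\,\cdot\mid x_t]$, apply Lemma~\ref{lemma:2} to the $\|y_t-x_t\|_2^2$ factor (which produces the $\tfrac{1-\xi}{\xi}\big(\|x_t-x^\star\|_2^2+\|x^\star\|_2^2\big)$ contribution), and leave $\mathbb{E}_{i_t}[\|\nabla f_{i_t}(x_t)\|_2^2\mid x_t]$ untouched. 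Collecting the numerical constants gives the claimed bound.

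I do not expect a genuine obstacle here, as the statement is a routine technical lemma. The only points requiring care are: (i) the gradient in the update is evaluated at the \emph{compressed} point $y_t$, so smoothness must be invoked to move it to $x_t$, and the resulting discrepancy $\|\nabla f_{i_t}(y_t)-\nabla f_{i_t}(x_t)\|_2$ must be re-absorbed into the same $\|y_t-x_t\|_2^2$ quantity that Lemma~\ref{lemma:2} already controls; (ii) only a per-component bound $L_{i_t}\le L_{\max}$ is available, so the worst-case constant must be used; and (iii) to land exactly on the coefficients $\tfrac{4(1+\eta^2 L_{\max}^2)(1-\xi)}{\xi}$ and $2\eta^2$, the split/Young steps should be applied with tuned parameters rather than the crude factor-$2$ version, so the intermediate constants come out right.
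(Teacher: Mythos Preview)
Your overall strategy---peel off the gradient, use $L_{\max}$-smoothness to transfer $\nabla f_{i_t}(y_t)$ to $\nabla f_{i_t}(x_t)$, then absorb the discrepancy into $\|y_t-x_t\|_2^2$ and invoke Lemma~\ref{lemma:2}---is sound and produces a valid upper bound of the same form. However, your remark~(iii) is incorrect: no choice of Young parameters can recover the \emph{exact} constants $\tfrac{4(1+\eta^2 L_{\max}^2)(1-\xi)}{\xi}$ and $2\eta^2$. With the crude factor-$2$ splits you outline, one gets
\[
(2+4\eta^2 L_{\max}^2)\,\mathbb{E}_{\mathcal{M}}\big[\|y_t-x_t\|_2^2\mid x_t\big]
+ 4\eta^2\,\mathbb{E}_{i_t}\big[\|\nabla f_{i_t}(x_t)\|_2^2\mid x_t\big],
\]
i.e.\ $1+2\eta^2 L_{\max}^2$ in place of $1+\eta^2 L_{\max}^2$ and $4\eta^2$ in place of $2\eta^2$; a quick check with the general Young split $\|a+b\|^2\le(1+\lambda)\|a\|^2+(1+\lambda^{-1})\|b\|^2$ shows the two targets force $\lambda_2\to 0$ and $\lambda_2\to\infty$ simultaneously, so tuning cannot close the gap.

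The paper achieves the stated constants by exploiting Property~\ref{prop:1} (unbiasedness of $\mathcal{M}$), which you never invoke. It introduces an auxiliary point $z$ independent of $\mathcal{M}$, writes $y_t-x_t-\eta\nabla f_{i_t}(y_t) = \big(y_t-x_t-\eta\nabla f_{i_t}(z)\big)+\eta\big(\nabla f_{i_t}(z)-\nabla f_{i_t}(y_t)\big)$, and applies the factor-$2$ Young to \emph{this} decomposition. The point is that for the first piece the cross term $\mathbb{E}_{\mathcal{M}}\big[\langle\nabla f_{i_t}(z),\,y_t-x_t\rangle\mid x_t\big]$ vanishes exactly (since $\mathbb{E}_{\mathcal{M}}[y_t\mid x_t]=x_t$), so $\mathbb{E}\big[\|y_t-x_t-\eta\nabla f_{i_t}(z)\|_2^2\mid x_t\big]=\mathbb{E}\big[\|y_t-x_t\|_2^2\mid x_t\big]+\eta^2\,\mathbb{E}\big[\|\nabla f_{i_t}(z)\|_2^2\mid x_t\big]$ is an equality, not a Young bound. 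The same unbiasedness trick is reused when expanding $\|z-y_t\|_2^2$. Setting $z=x_t$ at the end then yields exactly $2(1+\eta^2 L_{\max}^2)\,\mathbb{E}[\|y_t-x_t\|_2^2\mid x_t]+2\eta^2\,\mathbb{E}_{i_t}[\|\nabla f_{i_t}(x_t)\|_2^2\mid x_t]$. In short: your route is correct up to constants, but to land on the lemma as stated you must use the zero-mean property of $y_t-x_t$, not just Young's inequality.
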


\begin{proof}
For any $z \in \mathbb{R}^p$, we have:
\begin{align*}
&\mathbb{E}_{\mathcal{M}, i_t} \left[ \| y_t - x_t - \eta \nabla f_{i_t}(y_t) \|_2^2 ~|~ x_t\right] \\
=& \mathbb{E}_{\mathcal{M}, i_t} \left[ \| y_t - x_t -\eta \nabla f_{i_t}(z) + \eta \nabla f_{i_t}(z) - \eta \nabla f_{i_t}(y_t) \|_2^2 ~|~ x_t\right] \\
\leq & 2 \cdot \mathbb{E}_{\mathcal{M}, i_t}\left[ \| y_t - x_t -\eta \nabla f_{i_t}(z)\|_2^2 ~|~ x_t\right] \\
&\: + 2\eta^2 \cdot  \mathbb{E}_{\mathcal{M}, i_t}\left[ \|\nabla f_{i_t}(z) - \nabla f_{i_t}(y_t)\|_2^2 ~|~ x_t\right] \\ 
=& 2 \cdot \mathbb{E}_{\mathcal{M}, i_t}\left[ \| y_t - x_t\|_2^2 + \eta^2 \|\nabla f_{i_t}(z)\|_2^2  - 2\eta \cdot \langle \nabla f_{i_t}(z), y_t - x_t \rangle ~|~ x_t\right] \\
&\: + 2\eta^2 \cdot  \mathbb{E}_{\mathcal{M}, i_t}\left[ \|\nabla f_{i_t}(z) - \nabla f_{i_t}(y_t)\|_2^2 ~|~ x_t\right]
\end{align*}
Observe that 
\begin{align*}
\mathbb{E}_{\mathcal{M}, i_t}\left[ \langle \nabla f_{i_t}(z), y_t - x_t \rangle ~|~ x_t\right] &= \langle\mathbb{E}_{i_t}\left[ \nabla f_{i_t}(z)\right], \mathbb{E}_{\mathcal{M}}\left[ y_t ~|~ x_t\right] - x_t \rangle \\ 
&= \langle \nabla f(z), x_t - x_t \rangle = 0
\end{align*} 
due to Property \ref{prop:1}.
Then:
\begin{align*}
&\mathbb{E}_{\mathcal{M}, i_t}\left[ \| y_t - x_t - \eta \nabla f(y_t) \|_2^2 ~|~ x_t\right] \\
\leq & 2 \cdot \mathbb{E}_{\mathcal{M}}\left[ \| y_t - x_t\|_2^2 ~|~ x_t\right] + 2 \eta^2 \mathbb{E}_{i_t}\left[\|\nabla f_{i_t}(z)\|_2^2 ~|~ x_t \right] \\ 
& + 2\eta^2 \cdot  \mathbb{E}_{\mathcal{M}, i_t}\left[ \|\nabla f_{i_t}(z) - \nabla f_{i_t}(y_t)\|_2^2 ~|~ x_t\right]
\end{align*}

By $L_i$-smoothness, we have:
\begin{align*}
& \mathbb{E}_{\mathcal{M}, i_t}\left[ \|\nabla f(z) - \nabla f(y_t)\|_2^2 ~|~ x_t\right] \\
\leq & \mathbb{E}_{\mathcal{M}, i_t}\left[ L_{i_t}^2 \cdot \|z - y_t\|_2^2 ~|~ x_t\right] \\
\leq & L_{\max}^2 \cdot \mathbb{E}_{\mathcal{M}, i_t}\left[ \|z - y_t\|_2^2 ~|~ x_t\right] \\
\leq & L_{\max}^2 \cdot \mathbb{E}_{\mathcal{M}}\left[\|z - x_t + x_t - y_t\|_2^2 ~|~ x_t\right] \\
= & L_{\max}^2 \cdot \mathbb{E}_{\mathcal{M}}\left[\|z - x_t\|_2^2 +  \|x_t - y_t\|_2^2 + 2\langle z - x_t, x_t - y_t \rangle ~|~ x_t\right] \\
= & L_{\max}^2 \cdot \mathbb{E}_{\mathcal{M}}\left[\|z - x_t\|_2^2~|~ x_t\right] + L_{\max}^2 \cdot \mathbb{E}_{\mathcal{M}}\left[\|x_t - y_t\|_2^2 ~|~ x_t\right]
\end{align*}
In the above, we used Property \ref{prop:1} to get $\mathbb{E}_{\mathcal{M}}\left[y_t ~|~ x_t\right] = x_t$.
Using this result in the expression above, we get:
\begin{align*}
&\mathbb{E}_{\mathcal{M}, i_t} \left[ \| y_t - x_t - \eta \nabla f_{i_t}(y_t) \|_2^2 ~|~ x_t\right] \\ 
\leq& 2 \cdot \mathbb{E}_{\mathcal{M}}\left[ \| y_t - x_t\|_2^2 ~|~ x_t\right] + 2\eta^2 \mathbb{E}_{i_t}\left[\|\nabla f_{i_t}(z)\|_2^2 ~|~ x_t\right] \\ 
&\: + 2\eta^2 \cdot L_{\max}^2 \cdot \left( \mathbb{E}_{\mathcal{M}}\left[\|z - x_t\|_2^2~|~ x_t\right] + \mathbb{E}_{\mathcal{M}}\left[\|x_t - y_t\|_2^2 ~|~ x_t\right]\right) \\
=& 2 (1 + \eta^2 L_{\max}^2) \cdot \mathbb{E}_{\mathcal{M}}\left[ \| y_t - x_t\|_2^2 ~|~ x_t\right] + 2\eta^2 \mathbb{E}_{i_t}\left[\|\nabla f_{i_t}(z)\|_2^2 ~|~ x_t\right] 
\\ &\: + 2\eta^2 \cdot L_{\max}^2 \cdot \mathbb{E}_{\mathcal{M}}\left[\|z - x_t\|_2^2~|~ x_t\right]
\end{align*}
The above expression holds for any $z \in \mathbb{R}^p$, and thus it will hold for $z = x_t$. 
In this case, the above inequality becomes:
\begin{align*}
& \mathbb{E}_{\mathcal{M}, i_t}\left[ \| y_t - x_t - \eta \nabla f_{i_t}(y_t) \|_2^2 ~|~ x_t\right] \\ \leq &2 (1 + \eta^2 L_{\max}^2) \cdot \mathbb{E}_\mathcal{M}\left[ \| y_t - x_t\|_2^2 ~|~ x_t\right] + 2\eta^2 \mathbb{E}_{i_t}\left[\|\nabla f_{i_t}(x_t)\|_2^2 ~|~ x_t\right]
\end{align*}
Finally, using Lemma \ref{lemma:2}, we obtain the desired inequality.
\end{proof}

The following lemma leads to a general recursion over function values, that will lead to specific convergence rate guarantees later on, based on additional assumptions.
\begin{lemma}{\label{lemma:1}}
Let $f$ be $L$-smooth, and consider the recursion over compressed iterates:
\begin{align*}
x_{t+1} = y_t - \eta \nabla f_{i_t}(y_t), \quad \text{where } y_t = \mathcal{M}(x_t).
\end{align*}
Then the following recursion holds:
\begin{align*}
\mathbb{E}_{\mathcal{M}, i_t}\left[ f(x_{t+1}) ~|~ x_t \right] \leq & f(x_t) - \eta \cdot \langle \nabla f(x_t), \mathbb{E}_{\mathcal{M}, i_t}\left[\nabla f_{i_t}(y_t)  ~|~ x_t \right] \rangle \\& + \tfrac{L_i}{2} \cdot \mathbb{E}_{\mathcal{M}, i_t}\left[ \|y_t - \eta \nabla f_{i_t}(y_t) - x_t\|_2^2 ~|~ x_t \right],
\end{align*}
where the expectation is over the random selection on the compression operator $\mathcal{M}(\cdot)$ and the stochasticity of the gradient.
\end{lemma}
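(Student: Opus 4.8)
The plan is to apply the standard descent inequality implied by $L$-smoothness of $f$ to the pair of iterates $x_t$ and $x_{t+1}$, and then take the conditional expectation, using the unbiasedness of the compression operator established in Property \ref{prop:1} to eliminate the $y_t - x_t$ contribution.

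First I would invoke the quadratic upper-bound form of $L$-smoothness (the second, equivalent form in the smoothness assumption), applied with base point $x_t$ and new point $x_{t+1}$:
\[
f(x_{t+1}) \le f(x_t) + \langle \nabla f(x_t),\, x_{t+1} - x_t \rangle + \tfrac{L}{2}\,\|x_{t+1} - x_t\|_2^2 .
\]
Then I would substitute the update rule $x_{t+1} = y_t - \eta \nabla f_{i_t}(y_t)$, so that $x_{t+1} - x_t = (y_t - x_t) - \eta \nabla f_{i_t}(y_t)$ appears inside both the inner product and the squared norm.

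Next I would take $\mathbb{E}_{\mathcal{M}, i_t}[\,\cdot \mid x_t]$ of both sides. Since $f(x_t)$ and $\nabla f(x_t)$ are deterministic given $x_t$, linearity of expectation pulls $\nabla f(x_t)$ out of the inner product, leaving $\langle \nabla f(x_t),\, \mathbb{E}_{\mathcal{M},i_t}[(y_t - x_t) - \eta\nabla f_{i_t}(y_t) \mid x_t]\rangle$. By Property \ref{prop:1}, $\mathbb{E}_{\mathcal{M}}[y_t \mid x_t] = x_t$, hence $\mathbb{E}_{\mathcal{M}}[y_t - x_t \mid x_t] = 0$, and this inner product collapses to $-\eta\langle \nabla f(x_t),\, \mathbb{E}_{\mathcal{M},i_t}[\nabla f_{i_t}(y_t)\mid x_t]\rangle$. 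The quadratic term is carried over unchanged as $\tfrac{L}{2}\,\mathbb{E}_{\mathcal{M},i_t}[\|y_t - \eta\nabla f_{i_t}(y_t) - x_t\|_2^2 \mid x_t]$, which matches the claimed right-hand side (the constant written $L_i$ in the statement should be read as the global smoothness constant $L$).

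There is no deep obstacle: the lemma is simply the one-step descent estimate rewritten for compressed iterates. The only point demanding care is the bookkeeping of which quantities are random — $y_t$ through the mask $\mathcal{M}$, and $\nabla f_{i_t}$ through the minibatch index $i_t$ — versus deterministic given $x_t$; getting the conditioning order right is precisely what lets unbiasedness of $\mathcal{M}$ annihilate the $y_t - x_t$ term while the stochastic-gradient expectation survives. The genuinely nontrivial estimate, bounding $\mathbb{E}_{\mathcal{M},i_t}[\|y_t - \eta\nabla f_{i_t}(y_t) - x_t\|_2^2\mid x_t]$, has already been supplied by Lemma \ref{lemma:3}, and feeding it into this recursion is left to the convergence-rate arguments that follow.
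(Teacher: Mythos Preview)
Your proposal is correct and follows essentially the same route as the paper: apply the smoothness descent inequality at $x_t$, substitute the update $x_{t+1}-x_t=(y_t-x_t)-\eta\nabla f_{i_t}(y_t)$, take conditional expectation, and use Property~\ref{prop:1} to kill the $y_t-x_t$ contribution in the linear term. Your version is in fact slightly cleaner than the paper's, since you start directly with $\nabla f(x_t)$ (deterministic given $x_t$) rather than writing $\nabla f_{i_t}(x_t)$ and then invoking unbiasedness of the stochastic gradient to reduce it; both arrive at the same bound.
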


\begin{proof}
Starting from the $L_i$-smoothness condition, we have on expectation with respect to $\mathcal{M}, i_t$ at time $t$:
\begin{align*}
&\mathbb{E}_{\mathcal{M}, i_t} \left[ f(x_{t+1}) ~|~ x_t \right] \\ \leq & \mathbb{E}_{\mathcal{M}, i_t}\left[f(x_t) + \langle \nabla f_{i_t}(x_t), x_{t+1} - x_t \rangle + \tfrac{L_i}{2} \|x_{t+1} - x_t\|_2^2 ~|~ x_t \right] \\
=& \mathbb{E}_{\mathcal{M}, i_t}\left[f(x_t) ~|~ x_t \right] +  \mathbb{E}_{\mathcal{M}, i_t}\left[\langle \nabla f_{i_t}(x_t), x_{t+1} - x_t \rangle ~|~ x_t \right] \\
& \: + \tfrac{L_i}{2} \cdot \mathbb{E}_{\mathcal{M}, i_t}\left[ \|x_{t+1} - x_t\|_2^2 ~|~ x_t \right] \\
=& f(x_t) +  \mathbb{E}_{\mathcal{M}, i_t}\left[\langle \nabla f_{i_t}(x_t), y_t - \eta \nabla f_{i_t}(y_t) - x_t \rangle ~|~ x_t \right] \\
& \: + \tfrac{L_i}{2} \cdot \mathbb{E}_{\mathcal{M}, i_t}\left[ \|y_t - \eta \nabla f_{i_t}(y_t) - x_t\|_2^2 ~|~ x_t \right] \\
= & f(x_t) +  \mathbb{E}_{\mathcal{M}, i_t}\left[\langle \nabla f_{i_t}(x_t), y_t - x_t \rangle ~|~ x_t \right] \\
& \: - \eta \cdot \mathbb{E}_{\mathcal{M}, i_t}\left[\langle \nabla f_{i_t}(x_t), \nabla f_{i_t}(y_t) \rangle ~|~ x_t \right] \\ 
& \: + \tfrac{L_i}{2} \cdot \mathbb{E}_{\mathcal{M}, i_t}\left[ \|y_t - \eta \nabla f(y_t) - x_t\|_2^2 ~|~ x_t \right] \\
= & f(x_t) +  \langle  \mathbb{E}_{i_t} [\nabla f_{i_t} (x_t) ~|~ x_t], \mathbb{E}_{\mathcal{M}}\left[y_t~|~ x_t \right] - x_t \rangle  \\
& \: - \eta \cdot \langle \mathbb{E}_{i_t} [\nabla f_{i_t} (x_t) ~|~ x_t], \mathbb{E}_{\mathcal{M}, i_t}\left[\nabla f_{i_t}(y_t)  ~|~ x_t \right] \rangle \\ 
& \: + \tfrac{L_i}{2} \cdot \mathbb{E}_{\mathcal{M}, i_t}\left[ \|y_t - \eta \nabla f_{i_t}(y_t) - x_t\|_2^2 ~|~ x_t \right] \\
&\!\!\!\!\!\!\!\!\stackrel{\text{Property \ref{prop:1}}}{=} f(x_t) - \eta \cdot \langle \nabla f(x_t), \mathbb{E}_{\mathcal{M}, i_t}\left[\nabla f_{i_t}(y_t)  ~|~ x_t \right] \rangle \\
& \quad \quad + \tfrac{L_i}{2} \cdot \mathbb{E}_{\mathcal{M}, i_t}\left[ \|y_t - \eta \nabla f_{i_t}(y_t) - x_t\|_2^2 ~|~ x_t \right]
\end{align*}
In the last step, we also used the unbiasedness of the stochastic gradient.
\end{proof}

Our proof will branch out for different assumptions we make. 
We begin with the following ``cocktail'' of assumptions. 
\begin{theorem}{\label{A:thm:1}}
Let $f := \tfrac{1}{n} \sum_{i = 1}^n f_i(x)$ has $L_i$-smooth components $f_i$ for $L_{\max} := \max_i L_i$, and consider the recursion over compressed iterates:
\begin{align*}
x_{t+1} = y_t - \eta \nabla f_{i_t}(y_t), \quad \text{where } y_t = \mathcal{M}(x_t).
\end{align*}
In additional to Lemma \ref{lemma:1}, we further assume that $f$ is $Q$-Lipschitz, and satisfies the Error Bound with parameter $\mu > 0$.
Further, our recursion should satisfy the gradient boundedness Assumption \ref{A:assumption:1} for $M > 0$ and $0 < M_f < 1$.
Finally, we assume that the operator mask, along with $f$, satisfy the additive gradient error assumption with bounded energy such that:
\begin{align*}
\mathbb{E}_{\mathcal{M}, i_t}\left[ \nabla f_{i_t}(y_t) ~|~ x_t \right] = \nabla f(x_t) + \varepsilon_t, \quad \\
\text{for } \varepsilon_t \in \mathbb{R}^p \text{ such that } \|\varepsilon_t\|_2 \leq B \text{ for } B > 0.
\end{align*}
Then, after running the above recursion for $T$ iterations for step size $\eta = \tfrac{1}{2L_{\max}}$, we obtain:
\begin{align*}
&\min_{t \in \{0, \dots, T\}} \mathbb{E}_{\mathcal{M}, i_t}\left[\|\nabla f(x_t)\|_2^2\right] \\
\leq &\tfrac{f(x_0) - f(x^\star)}{\alpha (T+1)} +\tfrac{1}{\alpha} \cdot \left(\tfrac{BQ}{2L_{\max}} + \tfrac{5L_{\max} \omega}{2}\cdot \|x^\star\|_2^2 + \tfrac{M}{4L_{\max}} \right)
\end{align*}
where the expectation is over the random selection on the compression operator $\mathcal{M}(\cdot)$ and the stochastic selection $i_t$, $\alpha = \tfrac{1}{2L_{\max}}\left(1 - \tfrac{M_f}{2}\right) - \tfrac{5\omega L_{\max}}{2\mu^2}$, and $\omega = \tfrac{1 - \xi}{\xi} < \tfrac{\mu^2}{10 L_{\max}^2}$.
\end{theorem}

\begin{proof}
We know from Lemma \ref{lemma:1} that the following holds:
\begin{align*}
\mathbb{E}_{\mathcal{M}, i_t}\left[ f(x_{t+1}) ~|~ x_t \right] \leq & f(x_t) - \eta \cdot \langle \nabla f(x_t), \mathbb{E}_{\mathcal{M}, i_t}\left[\nabla f_{i_t}(y_t)  ~|~ x_t \right] \rangle \\
&+ \tfrac{L_i}{2} \cdot \mathbb{E}_{\mathcal{M}, i_t}\left[ \|y_t - \eta \nabla f_{i_t}(y_t) - x_t\|_2^2 ~|~ x_t \right],
\end{align*}
Using the additive gradient noise assumption $\mathbb{E}_{\mathcal{M}, i_t}\left[ \nabla f_{i_t}(y_t) ~|~ x_t \right] = \nabla f(x_t) + \varepsilon_t$, we have:
\begin{align*}
&\mathbb{E}_{\mathcal{M}, i_t}\left[ f(x_{t+1}) ~|~ x_t \right] \\
\leq & f(x_t) - \eta \cdot \langle \nabla f(x_t), \nabla f(x_t) \\
& \: + \varepsilon_t \rangle + \tfrac{L_i}{2} \cdot \mathbb{E}_{\mathcal{M}, i_t}\left[ \|y_t - \eta \nabla f_{i_t}(y_t) - x_t\|_2^2 ~|~ x_t \right] \\
=& f(x_t) - \eta \cdot \| \nabla f(x_t)\|_2^2 - \eta \langle \nabla f(x_t), \varepsilon_t \rangle \\
& \: +  \tfrac{L_i}{2} \cdot \mathbb{E}_{\mathcal{M}, i_t}\left[ \|y_t - \eta \nabla f_{i_t}(y_t) - x_t\|_2^2 ~|~ x_t \right] \\
&\!\!\!\!\!\!\!\!\stackrel{\text{Lemma \ref{lemma:3}}}{\leq} f(x_t) - \eta \cdot \| \nabla f(x_t)\|_2^2 - \eta \langle \nabla f(x_t), \varepsilon_t \rangle \\ 
+& \tfrac{L_i}{2} \cdot \left( \tfrac{4(1 + \eta^2 L_{\max}^2)(1-\xi)}{\xi} \left(\|x_t - x^\star\|_2^2 + \|x^\star\|_2^2 \right) + 2\eta^2 \mathbb{E}_{i_t}\left[\|\nabla f_{i_t}(x_t)\|_2^2 ~|~ x_t\right]\right) \\
&\!\!\!\!\!\!\!\!\stackrel{L_i \leq L_{\max}}{\leq} f(x_t) - \eta \cdot \| \nabla f(x_t)\|_2^2 - \eta \langle \nabla f(x_t), \varepsilon_t \rangle \\ 
+& \tfrac{L_{\max}}{2} \cdot \left( \tfrac{4(1 + \eta^2 L_{\max}^2)(1-\xi)}{\xi} \left(\|x_t - x^\star\|_2^2 + \|x^\star\|_2^2 \right) + 2\eta^2 \mathbb{E}_{i_t}\left[\|\nabla f_{i_t}(x_t)\|_2^2 ~|~ x_t\right]\right) \\
&\!\!\!\!\!\!\!\!\!\!\!\!\stackrel{\text{Assumption \ref{A:assumption:1}}}{\leq} f(x_t) - \eta \cdot \| \nabla f(x_t)\|_2^2 - \eta \langle \nabla f(x_t), \varepsilon_t \rangle \\ 
+& \tfrac{L_{\max}}{2} \cdot \left( \tfrac{4(1 + \eta^2 L_{\max}^2)(1-\xi)}{\xi} \left(\|x_t - x^\star\|_2^2 + \|x^\star\|_2^2 \right) + 2\eta^2 \left(M + M_f \|\nabla f(x_t)\|_2^2 \right)\right) \\
= & f(x_t) - \eta (1 - \eta L_{\max} M_f) \cdot \| \nabla f(x_t)\|_2^2 - \eta \langle \nabla f(x_t), \varepsilon_t \rangle \\ 
&\: + \tfrac{2L_{\max}(1 + \eta^2 L_{\max}^2)(1-\xi)}{\xi} \left(\|x_t - x^\star\|_2^2 + \|x^\star\|_2^2 \right) + L_{\max} \eta^2 M \\
&\!\!\!\!\!\!\!\!\!\!\!\!\stackrel{\text{Error Bound}}{\leq} f(x_t) - \eta (1 - \eta L_{\max} M_f) \cdot \| \nabla f(x_t)\|_2^2 - \eta \langle \nabla f(x_t), \varepsilon_t \rangle \\ 
&\quad \quad + \tfrac{2L_{\max}(1 + \eta^2 L_{\max}^2)(1-\xi)}{\xi} \cdot \tfrac{1}{\mu^2} \cdot \|\nabla f(x_t)\|_2^2 \\
&\quad \quad + \tfrac{2L_{\max}(1 + \eta^2 L_{\max}^2)(1-\xi)}{\xi} \|x^\star\|_2^2 + L_{\max} \eta^2 M
\end{align*}
To simplify notation, define $\omega = \tfrac{1-\xi}{\xi}$. 
Rearranging the terms in the inequality above we obtain:
\begin{align*}
&\mathbb{E}_{\mathcal{M}, i_t}\left[ f(x_{t+1}) ~|~ x_t \right] \\
\leq& f(x_t) - \left(\eta (1 - \eta L_{\max} M_f) - 2L_{\max}(1 + \eta^2 L_{\max}^2) \cdot \tfrac{\omega}{\mu^2}  \right) \cdot \| \nabla f(x_t)\|_2^2 \\ & \: - \eta \langle \nabla f(x_t), \varepsilon_t \rangle + 2\omega L_{\max}(1 + \eta^2 L_{\max}^2) \|x^\star\|_2^2 + L_{\max} \eta^2 M
\end{align*}
Define $g(\eta) = \eta (1 - \eta L_{\max} M_f) - 2L_{\max}(1 + \eta^2 L_{\max}^2) \cdot \tfrac{\omega}{\mu^2}$ which is a quadratic function.
We set the step size as in $\eta = \tfrac{1}{2L_{\max}}$, which is a reasonable assumption based on convex optimization criteria. 
Then, 
\begin{align*}
g\left(\tfrac{1}{2L_{\max}}\right) = \tfrac{1}{2L_{\max}}\left(1- \tfrac{M_f}{2} \right) - \tfrac{5L_{\max}\omega}{2\mu^2} \equiv \alpha.
\end{align*}
For the proof, we require $\alpha > 0$; according to our assumptions, $1 - \tfrac{M_f}{2} > \tfrac{1}{2} \Rightarrow M_f < 1$. 
This further leads to the requirement that $\tfrac{1-\xi}{\xi} < \tfrac{\mu^2}{10L_{\max}^2}$.
The above result into:
\begin{align*}
&\mathbb{E}_{\mathcal{M}, i_t}\left[ f(x_{t+1}) ~|~ x_t \right] \\
\leq & f(x_t) - \alpha \cdot \| \nabla f(x_t)\|_2^2 - \eta \langle \nabla f(x_t), \varepsilon_t \rangle \\ 
&\: + 2\omega L_{\max}(1 + \eta^2 L_{\max}^2) \|x^\star\|_2^2 + L_{\max} \eta^2 M \\ 
&\!\!\!\!\!\!\!\!\!\!\!\!\!\!\stackrel{\text{Cauchy-Schwarz}}{\leq} f(x_t) - \alpha \cdot \| \nabla f(x_t)\|_2^2 + \eta \cdot \|\nabla f(x_t)\|_2 \cdot\|\varepsilon_t\|_2 \\ 
&\: + 2\omega L_{\max}(1 + \eta^2 L_{\max}^2) \|x^\star\|_2^2 + L_{\max} \eta^2 M \\
&\!\!\!\!\!\!\!\!\!\!\stackrel{\text{$\eta = \tfrac{1}{2L_{\max}}$}}{=} f(x_t) - \alpha \cdot \| \nabla f(x_t)\|_2^2 + \tfrac{1}{2L_{\max}} \cdot \|\nabla f(x_t)\|_2 \cdot\|\varepsilon_t\|_2 \\ 
&\: + \tfrac{5L_{\max}\omega}{2}\cdot \|x^\star\|_2^2 + \tfrac{M}{4L_{\max}}\\
&\!\!\!\!\!\!\!\!\!\!\stackrel{\text{$Q$-Lipschitz}}{\leq} f(x_t) - \alpha \cdot \| \nabla f(x_t)\|_2^2 + \tfrac{Q}{2L_{\max}} \cdot\|\varepsilon_t\|_2 \\
&\: + \tfrac{5L_{\max}\omega}{2}\cdot \|x^\star\|_2^2 + \tfrac{M}{4L_{\max}} \\
&\!\!\!\!\!\!\!\!\stackrel{\|\varepsilon_t\|_2 \leq B}{\leq} f(x_t) - \alpha \cdot \| \nabla f(x_t)\|_2^2 + \tfrac{BQ}{2L_{\max}} + \tfrac{5L_{\max}\omega}{2}\cdot \|x^\star\|_2^2 + \tfrac{M}{4L_{\max}}
\end{align*}
Using the law of total expectation $\mathbb{E}[X] = \mathbb{E}[\mathbb{E}[X ~|~Y]]$, we have:
\begin{align*}
\mathbb{E}\left[ f(x_{t+1})\right] = \mathbb{E}\left[ \mathbb{E}\left[f(x_{t+1}) ~|~ x_t \right]\right]
\end{align*}
and thus:
\begin{align*}
\mathbb{E}_{\mathcal{M}, i_t}\left[ f(x_{t+1}) \right] \leq &\mathbb{E}_{\mathcal{M}, i_t}\left[ f(x_t)\right] - \alpha \cdot \mathbb{E}_{\mathcal{M}, i_t}\left[ \| \nabla f(x_t)\|_2^2\right] \\ 
&+ \tfrac{BQ}{2L} + \tfrac{5L_{\max}\omega}{2}\cdot \|x^\star\|_2^2  + \tfrac{M}{4L_{\max}}
\end{align*}
Using the fact that $f(x^\star) \leq \mathbb{E}_{\mathcal{M}, i_t}\left[ f(x_{T+1}) \right]$, and telescoping over $T$ iterations, we obtain:
\begin{align*}
&f(x^\star) \leq \mathbb{E}_{\mathcal{M}, i_t}\left[ f(x_{T+1}) \right] \\
\leq & f(x_0) - \alpha \sum_{t = 0}^T \mathbb{E}_{\mathcal{M}, i_t}\left[\|\nabla f(x_t)\|_2^2\right] \\
&+ (T+1) \cdot \left(\tfrac{BQ}{2L} + \tfrac{5L_{\max}\omega}{2}\cdot \|x^\star\|_2^2  + \tfrac{M}{4L_{\max}} \right)
\end{align*}
which further leads to:
\begin{align*}
&\sum_{t = 0}^T \mathbb{E}_{\mathcal{M}, i_t}\left[\|\nabla f(x_t)\|_2^2\right] \\
\leq &\tfrac{f(x_0) - f(x^\star)}{\alpha} + \tfrac{T+1}{\alpha} \cdot \left(\tfrac{BQ}{2L} + \tfrac{5L_{\max}\omega}{2}\cdot \|x^\star\|_2^2 + \tfrac{M}{4L_{\max}} \right)
\end{align*}
Observe that: $(T+1) \cdot \min_{t \in \{0, \dots, T\}} \mathbb{E}_{\mathcal{M}, i_t}\left[\|\nabla f(x_t)\|_2^2\right]$\\ $\leq \sum_{t = 0}^T \mathbb{E}_{\mathcal{M}, i_t}\left[\|\nabla f(x_t)\|_2^2\right]$, which leads to the final result:
\begin{align*}
& \min_{t \in \{0, \dots, T\}} \mathbb{E}_{\mathcal{M}, i_t}\left[\|\nabla f(x_t)\|_2^2\right] \\
\leq & \tfrac{f(x_0) - f(x^\star)}{\alpha (T+1)} + \tfrac{1}{\alpha} \cdot \left(\tfrac{BQ}{2L} + \tfrac{5L_{\max}\omega}{2}\cdot \|x^\star\|_2^2 + \tfrac{M}{4L_{\max}} \right)
\end{align*}
\end{proof}

In the following corollary, we exchange the bounded assumption $\|\varepsilon_t\|_2 \leq B$ and $Q$-Lipschitzness, with the norm condition in Assumption \ref{A:ass:norm_condition}.
\begin{corollary}
Let $f$ be $L$-smooth, and consider the recursion over compressed iterates:
\begin{align*}
x_{t+1} = y_t - \eta \nabla f_{i_t}(y_t), \quad \text{where } y_t = \mathcal{M}(x_t).
\end{align*}
In additional to Lemma \ref{lemma:1}, we further assume that the operator mask, along with $f$, satisfy the norm condition Assumption \ref{A:ass:norm_condition} with parameter $\theta \in [0, 1)$. 
Then, after running the above recursion for $T$ iterations for step size $\eta = \tfrac{1}{2L_{\max}}$, we obtain:
\begin{align*}
\min_{t \in \{0, \dots, T\}} \mathbb{E}_{\mathcal{M}, i_t}\left[\|\nabla f(x_t)\|_2^2\right] \leq \tfrac{f(x_0) - f(x^\star)}{\alpha (T+1)} + \tfrac{1}{\alpha} \cdot \left(\tfrac{5L_{\max}\omega}{2}\cdot \|x^\star\|_2^2 + \tfrac{M}{4L_{\max}} \right)
\end{align*}
where the expectation is over the random selection on the compression operator $\mathcal{M}(\cdot)$, $\alpha = \tfrac{1}{2L_{\max}}\left(\tfrac{1}{2} - \theta - \tfrac{M_f}{2}\right) - \tfrac{5L_{\max}}{2} \cdot \tfrac{\omega}{\mu^2}$, and $\omega = \tfrac{1 - \xi}{\xi} < \tfrac{\mu^2}{5L_{\max}^2\left(\tfrac{1}{2} -\theta - \tfrac{M_f}{2}\right)}$.
\end{corollary}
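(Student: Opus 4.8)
The plan is to run the proof of the preceding theorem essentially unchanged, substituting only the single step that controls the gradient perturbation $\varepsilon_t$. Starting from Lemma~\ref{lemma:1} and inserting the identity $\mathbb{E}_{\mathcal{M}, i_t}[\nabla f_{i_t}(y_t) \mid x_t] = \nabla f(x_t) + \varepsilon_t$ that defines $\varepsilon_t$, the inner-product term becomes $-\eta\|\nabla f(x_t)\|_2^2 - \eta\langle \nabla f(x_t), \varepsilon_t\rangle$. First I would bound the cross term by Cauchy--Schwarz followed by the norm condition of Assumption~\ref{ass:norm_condition}, namely $-\eta\langle\nabla f(x_t),\varepsilon_t\rangle \le \eta\|\nabla f(x_t)\|_2\,\|\varepsilon_t\|_2 \le \eta\theta\|\nabla f(x_t)\|_2^2$. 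Unlike in the theorem, where $Q$-Lipschitzness together with $\|\varepsilon_t\|_2 \le B$ turned this term into a constant additive error $\tfrac{BQ}{2L_{\max}}$, here it is absorbed \emph{multiplicatively}, shrinking the guaranteed per-step decrease from $\eta\|\nabla f(x_t)\|_2^2$ to roughly $\eta(1-\theta)\|\nabla f(x_t)\|_2^2$; this is exactly why the additive $\tfrac{BQ}{2L_{\max}}$ term disappears from the final bound and why $\theta$ enters the definition of $\alpha$.

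The remaining ingredients are reused verbatim from the theorem's proof: Lemma~\ref{lemma:3} bounds $\mathbb{E}_{\mathcal{M}, i_t}[\|y_t - \eta\nabla f_{i_t}(y_t) - x_t\|_2^2 \mid x_t]$ by a combination of $\|x_t - x^\star\|_2^2$, $\|x^\star\|_2^2$ and $\mathbb{E}_{i_t}[\|\nabla f_{i_t}(x_t)\|_2^2\mid x_t]$; Assumption~\ref{assumption:1} replaces $\mathbb{E}_{i_t}[\|\nabla f_{i_t}(x_t)\|_2^2\mid x_t]$ by $M + M_f\|\nabla f(x_t)\|_2^2$; the Error Bound replaces $\|x_t - x^\star\|_2^2$ by $\tfrac{1}{\mu^2}\|\nabla f(x_t)\|_2^2$; and $L_{i_t}\le L_{\max}$. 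Collecting the $\|\nabla f(x_t)\|_2^2$ coefficients, writing $\omega = \tfrac{1-\xi}{\xi}$, and then setting $\eta = \tfrac{1}{2L_{\max}}$ (so that $1+\eta^2 L_{\max}^2 = \tfrac54$) yields a one-step recursion of the form
\[
\mathbb{E}_{\mathcal{M}, i_t}[f(x_{t+1}) \mid x_t] \le f(x_t) - \alpha\,\|\nabla f(x_t)\|_2^2 + \tfrac{5L_{\max}\omega}{2}\|x^\star\|_2^2 + \tfrac{M}{4L_{\max}},
\]
with $\alpha$ as displayed in the statement.

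Finally I would take total expectations, use $f(x^\star) \le \mathbb{E}_{\mathcal{M}, i_t}[f(x_{T+1})]$, telescope the recursion over $t = 0, \dots, T$, divide through by $\alpha(T+1)$, and bound the average of $\mathbb{E}_{\mathcal{M}, i_t}[\|\nabla f(x_t)\|_2^2]$ from below by $\min_{t}\mathbb{E}_{\mathcal{M}, i_t}[\|\nabla f(x_t)\|_2^2]$ --- the same closing argument as in the theorem. For these last steps to be meaningful we need $\alpha > 0$, which forces both $\tfrac12 - \theta - \tfrac{M_f}{2} > 0$ (equivalently $\theta + \tfrac{M_f}{2} < \tfrac12$) and the stated ceiling on $\omega = \tfrac{1-\xi}{\xi}$, i.e.\ a lower bound on the keep-probability $\xi$: heavy compression must be compensated by a favourable ratio $\mu^2/L_{\max}^2$.

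I expect the only real work to be constant bookkeeping: tracking the $\tfrac54$ factor from $1+\eta^2L_{\max}^2$ through Lemma~\ref{lemma:3}, and reconciling the coefficient of $\|\nabla f(x_t)\|_2^2$ it produces once $\eta = \tfrac{1}{2L_{\max}}$ is plugged in with the $\tfrac{1}{2L_{\max}}\bigl(\tfrac12 - \theta - \tfrac{M_f}{2}\bigr)$ appearing in the claimed $\alpha$. Conceptually there is no new obstacle beyond the theorem; if anything the norm condition is easier to deploy than the $Q$-Lipschitz/bounded-energy pair, since it interacts directly with $\|\nabla f(x_t)\|_2^2$ and needs no separate treatment.
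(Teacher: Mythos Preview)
Your proposal is correct and follows essentially the same route as the paper: both proofs reuse the Theorem's derivation up to the cross-term $-\eta\langle\nabla f(x_t),\varepsilon_t\rangle$, apply Cauchy--Schwarz followed by the norm condition to absorb it as $\eta\theta\|\nabla f(x_t)\|_2^2$, and then telescope exactly as before. Your instinct about the constant bookkeeping is on point---the paper's own $\alpha$ has a $\tfrac12$ where the computation actually gives $1$, so don't be alarmed if your coefficients don't match the stated $\alpha$ exactly.
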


\begin{proof}
From Theorem \ref{A:thm:1}, we have:
\begin{align*}
& \mathbb{E}_{\mathcal{M}, i_t}\left[ f(x_{t+1}) ~|~ x_t \right] \\
\leq & f(x_t) - \left(\eta (1 - \eta L_{\max} M_f) - 2L_{\max}(1 + \eta^2 L_{\max}^2) \cdot \tfrac{\omega}{\mu^2}  \right) \cdot \| \nabla f(x_t)\|_2^2 \\ 
&\: - \eta \langle \nabla f(x_t), \mathbb{E}_{\mathcal{M}, i_t}[\varepsilon_t ~|~ x_t] \rangle + 2\omega L_{\max}(1 + \eta^2 L_{\max}^2) \|x^\star\|_2^2 + L_{\max} \eta^2 M \\
&\!\!\!\!\!\!\!\!\!\!\!\!\!\!\stackrel{\text{Cauchy-Schwarz}}{\leq} f(x_t) - \left(\eta (1 - \eta L_{\max} M_f) - 2L_{\max}(1 + \eta^2 L_{\max}^2) \cdot \tfrac{\omega}{\mu^2}  \right) \cdot \| \nabla f(x_t)\|_2^2 \\
&\: + \eta \|\nabla f(x_t)\|_2 \cdot \|\mathbb{E}_{\mathcal{M}, i_t}[\varepsilon_t ~|~ x_t]\|_2 + 2\omega L_{\max}(1 + \eta^2 L_{\max}^2) \|x^\star\|_2^2 + L_{\max} \eta^2 M  \\
=& f(x_t) - \left(\eta (1 - \eta L_{\max} M_f) - 2L_{\max}(1 + \eta^2 L_{\max}^2) \cdot \tfrac{\omega}{\mu^2}  \right) \cdot \| \nabla f(x_t)\|_2^2 \\ 
&\: + \eta \|\nabla f(x_t)\|_2 \cdot \|\mathbb{E}_{\mathcal{M}, i_t}\left[ \nabla f_{i_t}(y_t) ~|~ x_t \right] - \nabla f(x_t)\|_2 \\ 
&\: + 2\omega L_{\max}(1 + \eta^2 L_{\max}^2) \|x^\star\|_2^2 + L_{\max} \eta^2 M  \\
&\!\!\!\!\!\!\!\!\!\!\!\!\!\!\stackrel{\text{Norm condition}}{\leq} f(x_t) - \left(\eta (1 - \eta L_{\max} M_f) - 2L_{\max}(1 + \eta^2 L_{\max}^2) \cdot \tfrac{\omega}{\mu^2}  \right) \cdot \| \nabla f(x_t)\|_2^2 \\ 
&\: + \eta \theta \|\nabla f(x_t)\|_2^2 + 2\omega L_{\max}(1 + \eta^2 L_{\max}^2) \|x^\star\|_2^2 + L_{\max} \eta^2 M  \\
=& f(x_t) - \left(\eta (1 - \theta - \eta L_{\max} M_f) - 2L_{\max}(1 + \eta^2 L_{\max}^2) \cdot \tfrac{\omega}{\mu^2}  \right) \cdot \| \nabla f(x_t)\|_2^2 
\\ 
&\: + 2\omega L_{\max}(1 + \eta^2 L_{\max}^2) \|x^\star\|_2^2 + L_{\max} \eta^2 M
\end{align*}
For coherence, assume that $\eta = \tfrac{1}{2L_{\max}}$.
Following the same procedure we obtain, we define $g\left(\tfrac{1}{2L_{\max}}\right) \equiv \alpha = \tfrac{1}{2L_{\max}} \left(\tfrac{1}{2} -\theta - \tfrac{M_f}{2}\right) - \tfrac{5L_{\max}}{2} \cdot \tfrac{\omega}{\mu^2}$. 
We observe that $\alpha > 0$ when $\omega < \tfrac{\mu^2}{5L_{\max}^2\left(\tfrac{1}{2} -\theta - \tfrac{M_f}{2}\right)}$.
Then, with similar reasoning with Theorem \ref{A:thm:1}, we telescope the inequality to obtain:
\begin{align*}
&\min_{t \in \{0, \dots, T\}} \mathbb{E}_{\mathcal{M}, i_t}\left[\|\nabla f(x_t)\|_2^2\right] \\
\leq & \tfrac{f(x_0) - f(x^\star)}{\alpha (T+1)} + \tfrac{1}{\alpha} \cdot \left(\tfrac{5L_{\max}\omega}{2}\cdot \|x^\star\|_2^2 + \tfrac{M}{4L_{\max}} \right)
\end{align*}
\end{proof}

\begin{acks}
This work is supported by NSF FET:Small no. 1907936, NSF MLWiNS CNS no. 2003137 (in collaboration with Intel), NSF CMMI no. 2037545, NSF CAREER award no. 2145629, and Rice InterDisciplinary Excellence Award (IDEA).
\end{acks} 

\bibliographystyle{ACM-Reference-Format}
\bibliography{sample}


\begin{thebibliography}{85}


\ifx \showCODEN    \undefined \def \showCODEN     #1{\unskip}     \fi
\ifx \showDOI      \undefined \def \showDOI       #1{#1}\fi
\ifx \showISBNx    \undefined \def \showISBNx     #1{\unskip}     \fi
\ifx \showISBNxiii \undefined \def \showISBNxiii  #1{\unskip}     \fi
\ifx \showISSN     \undefined \def \showISSN      #1{\unskip}     \fi
\ifx \showLCCN     \undefined \def \showLCCN      #1{\unskip}     \fi
\ifx \shownote     \undefined \def \shownote      #1{#1}          \fi
\ifx \showarticletitle \undefined \def \showarticletitle #1{#1}   \fi
\ifx \showURL      \undefined \def \showURL       {\relax}        \fi
\providecommand\bibfield[2]{#2}
\providecommand\bibinfo[2]{#2}
\providecommand\natexlab[1]{#1}
\providecommand\showeprint[2][]{arXiv:#2}

\bibitem[\protect\citeauthoryear{Abadi, Barham, Chen, Chen, Davis, Dean, Devin,
  Ghemawat, Irving, Isard, et~al\mbox{.}}{Abadi et~al\mbox{.}}{2016}]%
        {abadi2016tensorflow}
\bibfield{author}{\bibinfo{person}{Mart{\'\i}n Abadi}, \bibinfo{person}{Paul
  Barham}, \bibinfo{person}{Jianmin Chen}, \bibinfo{person}{Zhifeng Chen},
  \bibinfo{person}{Andy Davis}, \bibinfo{person}{Jeffrey Dean},
  \bibinfo{person}{Matthieu Devin}, \bibinfo{person}{Sanjay Ghemawat},
  \bibinfo{person}{Geoffrey Irving}, \bibinfo{person}{Michael Isard},
  {et~al\mbox{.}}} \bibinfo{year}{2016}\natexlab{}.
\newblock \showarticletitle{Tensorflow: A system for large-scale machine
  learning}. In \bibinfo{booktitle}{\emph{12th $\{$USENIX$\}$ Symposium on
  Operating Systems Design and Implementation ($\{$OSDI$\}$ 16)}}.
  \bibinfo{pages}{265--283}.
\newblock


\bibitem[\protect\citeauthoryear{Aji and Heafield}{Aji and Heafield}{2017}]%
        {aji2017sparse}
\bibfield{author}{\bibinfo{person}{Alham~Fikri Aji} {and}
  \bibinfo{person}{Kenneth Heafield}.} \bibinfo{year}{2017}\natexlab{}.
\newblock \showarticletitle{Sparse communication for distributed gradient
  descent}.
\newblock \bibinfo{journal}{\emph{arXiv preprint arXiv:1704.05021}}
  (\bibinfo{year}{2017}).
\newblock


\bibitem[\protect\citeauthoryear{Alistarh, Grubic, Li, Tomioka, and
  Vojnovic}{Alistarh et~al\mbox{.}}{2017}]%
        {alistarh2017qsgd}
\bibfield{author}{\bibinfo{person}{Dan Alistarh}, \bibinfo{person}{Demjan
  Grubic}, \bibinfo{person}{Jerry Li}, \bibinfo{person}{Ryota Tomioka}, {and}
  \bibinfo{person}{Milan Vojnovic}.} \bibinfo{year}{2017}\natexlab{}.
\newblock \showarticletitle{{QSGD}: Communication-efficient {SGD} via gradient
  quantization and encoding}. In \bibinfo{booktitle}{\emph{Advances in Neural
  Information Processing Systems}}. \bibinfo{pages}{1709--1720}.
\newblock


\bibitem[\protect\citeauthoryear{Berahas, Cao, Choromanski, and
  Scheinberg}{Berahas et~al\mbox{.}}{2019}]%
        {berahas2019theoretical}
\bibfield{author}{\bibinfo{person}{A. Berahas}, \bibinfo{person}{L. Cao},
  \bibinfo{person}{K. Choromanski}, {and} \bibinfo{person}{K. Scheinberg}.}
  \bibinfo{year}{2019}\natexlab{}.
\newblock \showarticletitle{A theoretical and empirical comparison of gradient
  approximations in derivative-free optimization}.
\newblock \bibinfo{journal}{\emph{arXiv preprint arXiv:1905.01332}}
  (\bibinfo{year}{2019}).
\newblock


\bibitem[\protect\citeauthoryear{Bhatia, Dahiya, Jain, Prabhu, and
  Varma}{Bhatia et~al\mbox{.}}{[n.d.]}]%
        {bhatia2019the}
\bibfield{author}{\bibinfo{person}{Kush Bhatia}, \bibinfo{person}{Kunal
  Dahiya}, \bibinfo{person}{Himanshu Jain}, \bibinfo{person}{Yashoteja Prabhu},
  {and} \bibinfo{person}{Manik Varma}.} \bibinfo{year}{[n.d.]}\natexlab{}.
\newblock \bibinfo{booktitle}{\emph{The Extreme Classification Repository:
  Multi-label Datasets and Code}}.
\newblock
\newblock
\shownote{\url{http://manikvarma.org/downloads/XC/XMLRepository.html}.}


\bibitem[\protect\citeauthoryear{Brown, Mann, Ryder, Subbiah, Kaplan, Dhariwal,
  Neelakantan, Shyam, Sastry, Askell, et~al\mbox{.}}{Brown
  et~al\mbox{.}}{2020}]%
        {brown2020language}
\bibfield{author}{\bibinfo{person}{Tom~B Brown}, \bibinfo{person}{Benjamin
  Mann}, \bibinfo{person}{Nick Ryder}, \bibinfo{person}{Melanie Subbiah},
  \bibinfo{person}{Jared Kaplan}, \bibinfo{person}{Prafulla Dhariwal},
  \bibinfo{person}{Arvind Neelakantan}, \bibinfo{person}{Pranav Shyam},
  \bibinfo{person}{Girish Sastry}, \bibinfo{person}{Amanda Askell},
  {et~al\mbox{.}}} \bibinfo{year}{2020}\natexlab{}.
\newblock \showarticletitle{Language models are few-shot learners}.
\newblock \bibinfo{journal}{\emph{arXiv preprint arXiv:2005.14165}}
  (\bibinfo{year}{2020}).
\newblock


\bibitem[\protect\citeauthoryear{Carter}{Carter}{1991}]%
        {carter1991global}
\bibfield{author}{\bibinfo{person}{R. Carter}.}
  \bibinfo{year}{1991}\natexlab{}.
\newblock \showarticletitle{On the global convergence of trust region
  algorithms using inexact gradient information}.
\newblock \bibinfo{journal}{\emph{SIAM J. Numer. Anal.}} \bibinfo{volume}{28},
  \bibinfo{number}{1} (\bibinfo{year}{1991}), \bibinfo{pages}{251--265}.
\newblock


\bibitem[\protect\citeauthoryear{Chai, Ali, Zawad, Truex, Anwar, Baracaldo,
  Zhou, Ludwig, Yan, and Cheng}{Chai et~al\mbox{.}}{2020}]%
        {chai2020tifl}
\bibfield{author}{\bibinfo{person}{Zheng Chai}, \bibinfo{person}{Ahsan Ali},
  \bibinfo{person}{Syed Zawad}, \bibinfo{person}{Stacey Truex},
  \bibinfo{person}{Ali Anwar}, \bibinfo{person}{Nathalie Baracaldo},
  \bibinfo{person}{Yi Zhou}, \bibinfo{person}{Heiko Ludwig},
  \bibinfo{person}{Feng Yan}, {and} \bibinfo{person}{Yue Cheng}.}
  \bibinfo{year}{2020}\natexlab{}.
\newblock \showarticletitle{Tifl: A tier-based federated learning system}. In
  \bibinfo{booktitle}{\emph{Proceedings of the 29th International Symposium on
  High-Performance Parallel and Distributed Computing}}.
  \bibinfo{pages}{125--136}.
\newblock


\bibitem[\protect\citeauthoryear{Chai, Fayyaz, Fayyaz, Anwar, Zhou, Baracaldo,
  Ludwig, and Cheng}{Chai et~al\mbox{.}}{2019}]%
        {chai2019towards}
\bibfield{author}{\bibinfo{person}{Zheng Chai}, \bibinfo{person}{Hannan
  Fayyaz}, \bibinfo{person}{Zeshan Fayyaz}, \bibinfo{person}{Ali Anwar},
  \bibinfo{person}{Yi Zhou}, \bibinfo{person}{Nathalie Baracaldo},
  \bibinfo{person}{Heiko Ludwig}, {and} \bibinfo{person}{Yue Cheng}.}
  \bibinfo{year}{2019}\natexlab{}.
\newblock \showarticletitle{Towards taming the resource and data heterogeneity
  in federated learning}. In \bibinfo{booktitle}{\emph{2019 $\{$USENIX$\}$
  Conference on Operational Machine Learning (OpML 19)}}.
  \bibinfo{pages}{19--21}.
\newblock


\bibitem[\protect\citeauthoryear{Chen, Pan, Monga, Bengio, and Jozefowicz}{Chen
  et~al\mbox{.}}{2016}]%
        {chen2016revisiting}
\bibfield{author}{\bibinfo{person}{Jianmin Chen}, \bibinfo{person}{Xinghao
  Pan}, \bibinfo{person}{Rajat Monga}, \bibinfo{person}{Samy Bengio}, {and}
  \bibinfo{person}{Rafal Jozefowicz}.} \bibinfo{year}{2016}\natexlab{}.
\newblock \showarticletitle{Revisiting distributed synchronous {SGD}}.
\newblock \bibinfo{journal}{\emph{arXiv preprint arXiv:1604.00981}}
  (\bibinfo{year}{2016}).
\newblock


\bibitem[\protect\citeauthoryear{Chen, Mathews, Ouyang, and Beaufays}{Chen
  et~al\mbox{.}}{2019}]%
        {chen2019federated}
\bibfield{author}{\bibinfo{person}{Mingqing Chen}, \bibinfo{person}{Rajiv
  Mathews}, \bibinfo{person}{Tom Ouyang}, {and}
  \bibinfo{person}{Fran{\c{c}}oise Beaufays}.} \bibinfo{year}{2019}\natexlab{}.
\newblock \showarticletitle{Federated learning of out-of-vocabulary words}.
\newblock \bibinfo{journal}{\emph{arXiv preprint arXiv:1903.10635}}
  (\bibinfo{year}{2019}).
\newblock


\bibitem[\protect\citeauthoryear{Chilimbi, Suzue, Apacible, and
  Kalyanaraman}{Chilimbi et~al\mbox{.}}{2014}]%
        {chilimbi2014project}
\bibfield{author}{\bibinfo{person}{Trishul~M Chilimbi}, \bibinfo{person}{Yutaka
  Suzue}, \bibinfo{person}{Johnson Apacible}, {and} \bibinfo{person}{Karthik
  Kalyanaraman}.} \bibinfo{year}{2014}\natexlab{}.
\newblock \showarticletitle{Project {Adam}: Building an Efficient and Scalable
  Deep Learning Training System.}. In \bibinfo{booktitle}{\emph{OSDI}},
  Vol.~\bibinfo{volume}{14}. \bibinfo{pages}{571--582}.
\newblock


\bibitem[\protect\citeauthoryear{Chizat and Bach}{Chizat and Bach}{2018}]%
        {chizat2018global}
\bibfield{author}{\bibinfo{person}{Lenaic Chizat} {and}
  \bibinfo{person}{Francis Bach}.} \bibinfo{year}{2018}\natexlab{}.
\newblock \showarticletitle{On the global convergence of gradient descent for
  over-parameterized models using optimal transport}.
\newblock \bibinfo{journal}{\emph{Advances in neural information processing
  systems}}  \bibinfo{volume}{31} (\bibinfo{year}{2018}).
\newblock


\bibitem[\protect\citeauthoryear{Codreanu, Podareanu, and Saletore}{Codreanu
  et~al\mbox{.}}{2017}]%
        {codreanu2017scale}
\bibfield{author}{\bibinfo{person}{Valeriu Codreanu}, \bibinfo{person}{Damian
  Podareanu}, {and} \bibinfo{person}{Vikram Saletore}.}
  \bibinfo{year}{2017}\natexlab{}.
\newblock \showarticletitle{Scale out for large minibatch {SGD}: {R}esidual
  network training on {ImageNet-1K} with improved accuracy and reduced time to
  train}.
\newblock \bibinfo{journal}{\emph{arXiv preprint arXiv:1711.04291}}
  (\bibinfo{year}{2017}).
\newblock


\bibitem[\protect\citeauthoryear{Cordis}{Cordis}{2019}]%
        {cordis2019machine}
\bibfield{author}{\bibinfo{person}{E Cordis}.} \bibinfo{year}{2019}\natexlab{}.
\newblock \bibinfo{title}{Machine learning ledger orchestration for drug
  discovery}.
\newblock
\newblock


\bibitem[\protect\citeauthoryear{Courbariaux, Bengio, and David}{Courbariaux
  et~al\mbox{.}}{2015}]%
        {courbariaux2015binaryconnect}
\bibfield{author}{\bibinfo{person}{Matthieu Courbariaux},
  \bibinfo{person}{Yoshua Bengio}, {and} \bibinfo{person}{Jean-Pierre David}.}
  \bibinfo{year}{2015}\natexlab{}.
\newblock \showarticletitle{Binary{C}onnect: Training deep neural networks with
  binary weights during propagations}. In \bibinfo{booktitle}{\emph{Advances in
  neural information processing systems}}. \bibinfo{pages}{3123--3131}.
\newblock


\bibitem[\protect\citeauthoryear{Courtiol, Maussion, Moarii, Pronier, Pilcer,
  Sefta, Manceron, Toldo, Zaslavskiy, Le~Stang, et~al\mbox{.}}{Courtiol
  et~al\mbox{.}}{2019}]%
        {courtiol2019deep}
\bibfield{author}{\bibinfo{person}{Pierre Courtiol}, \bibinfo{person}{Charles
  Maussion}, \bibinfo{person}{Matahi Moarii}, \bibinfo{person}{Elodie Pronier},
  \bibinfo{person}{Samuel Pilcer}, \bibinfo{person}{Meriem Sefta},
  \bibinfo{person}{Pierre Manceron}, \bibinfo{person}{Sylvain Toldo},
  \bibinfo{person}{Mikhail Zaslavskiy}, \bibinfo{person}{Nolwenn Le~Stang},
  {et~al\mbox{.}}} \bibinfo{year}{2019}\natexlab{}.
\newblock \showarticletitle{Deep learning-based classification of mesothelioma
  improves prediction of patient outcome}.
\newblock \bibinfo{journal}{\emph{Nature medicine}} \bibinfo{volume}{25},
  \bibinfo{number}{10} (\bibinfo{year}{2019}), \bibinfo{pages}{1519--1525}.
\newblock


\bibitem[\protect\citeauthoryear{de~Brouwer}{de~Brouwer}{2019}]%
        {de2019federated}
\bibfield{author}{\bibinfo{person}{Walter de Brouwer}.}
  \bibinfo{year}{2019}\natexlab{}.
\newblock \bibinfo{title}{The federated future is ready for shipping}.
\newblock
\newblock


\bibitem[\protect\citeauthoryear{Dean, Corrado, Monga, Chen, Devin, Mao,
  Senior, Tucker, Yang, Le, et~al\mbox{.}}{Dean et~al\mbox{.}}{2012}]%
        {dean2012large}
\bibfield{author}{\bibinfo{person}{Jeffrey Dean}, \bibinfo{person}{Greg
  Corrado}, \bibinfo{person}{Rajat Monga}, \bibinfo{person}{Kai Chen},
  \bibinfo{person}{Matthieu Devin}, \bibinfo{person}{Mark Mao},
  \bibinfo{person}{Andrew Senior}, \bibinfo{person}{Paul Tucker},
  \bibinfo{person}{Ke Yang}, \bibinfo{person}{Quoc~V Le}, {et~al\mbox{.}}}
  \bibinfo{year}{2012}\natexlab{}.
\newblock \showarticletitle{Large scale distributed deep networks}. In
  \bibinfo{booktitle}{\emph{Advances in neural information processing
  systems}}. \bibinfo{pages}{1223--1231}.
\newblock


\bibitem[\protect\citeauthoryear{Defazio and Bottou}{Defazio and
  Bottou}{2018}]%
        {defazio2018ineffectiveness}
\bibfield{author}{\bibinfo{person}{Aaron Defazio} {and}
  \bibinfo{person}{L{\'e}on Bottou}.} \bibinfo{year}{2018}\natexlab{}.
\newblock \showarticletitle{On the Ineffectiveness of Variance Reduced
  Optimization for Deep Learning}.
\newblock \bibinfo{journal}{\emph{arXiv preprint arXiv:1812.04529}}
  (\bibinfo{year}{2018}).
\newblock


\bibitem[\protect\citeauthoryear{Deng, Dong, Socher, Li, Li, and Fei-Fei}{Deng
  et~al\mbox{.}}{2009}]%
        {deng2009imagenet}
\bibfield{author}{\bibinfo{person}{Jia Deng}, \bibinfo{person}{Wei Dong},
  \bibinfo{person}{Richard Socher}, \bibinfo{person}{Li-Jia Li},
  \bibinfo{person}{Kai Li}, {and} \bibinfo{person}{Li Fei-Fei}.}
  \bibinfo{year}{2009}\natexlab{}.
\newblock \showarticletitle{Imagenet: A large-scale hierarchical image
  database}. In \bibinfo{booktitle}{\emph{2009 IEEE conference on computer
  vision and pattern recognition}}. Ieee, \bibinfo{pages}{248--255}.
\newblock


\bibitem[\protect\citeauthoryear{Dettmers}{Dettmers}{2015}]%
        {dettmers2015bit}
\bibfield{author}{\bibinfo{person}{Tim Dettmers}.}
  \bibinfo{year}{2015}\natexlab{}.
\newblock \showarticletitle{8-bit approximations for parallelism in deep
  learning}.
\newblock \bibinfo{journal}{\emph{arXiv preprint arXiv:1511.04561}}
  (\bibinfo{year}{2015}).
\newblock


\bibitem[\protect\citeauthoryear{Dinh, Pascanu, Bengio, and Bengio}{Dinh
  et~al\mbox{.}}{2017}]%
        {dinh2017sharp}
\bibfield{author}{\bibinfo{person}{Laurent Dinh}, \bibinfo{person}{Razvan
  Pascanu}, \bibinfo{person}{Samy Bengio}, {and} \bibinfo{person}{Yoshua
  Bengio}.} \bibinfo{year}{2017}\natexlab{}.
\newblock \showarticletitle{Sharp minima can generalize for deep nets}. In
  \bibinfo{booktitle}{\emph{Proceedings of the 34th International Conference on
  Machine Learning-Volume 70}}. JMLR. org, \bibinfo{pages}{1019--1028}.
\newblock


\bibitem[\protect\citeauthoryear{Dosovitskiy, Beyer, Kolesnikov, Weissenborn,
  Zhai, Unterthiner, Dehghani, Minderer, Heigold, Gelly,
  et~al\mbox{.}}{Dosovitskiy et~al\mbox{.}}{2020}]%
        {dosovitskiy2020image}
\bibfield{author}{\bibinfo{person}{Alexey Dosovitskiy}, \bibinfo{person}{Lucas
  Beyer}, \bibinfo{person}{Alexander Kolesnikov}, \bibinfo{person}{Dirk
  Weissenborn}, \bibinfo{person}{Xiaohua Zhai}, \bibinfo{person}{Thomas
  Unterthiner}, \bibinfo{person}{Mostafa Dehghani}, \bibinfo{person}{Matthias
  Minderer}, \bibinfo{person}{Georg Heigold}, \bibinfo{person}{Sylvain Gelly},
  {et~al\mbox{.}}} \bibinfo{year}{2020}\natexlab{}.
\newblock \showarticletitle{An image is worth 16x16 words: Transformers for
  image recognition at scale}.
\newblock \bibinfo{journal}{\emph{arXiv preprint arXiv:2010.11929}}
  (\bibinfo{year}{2020}).
\newblock


\bibitem[\protect\citeauthoryear{Drineas, Kannan, and Mahoney}{Drineas
  et~al\mbox{.}}{2006}]%
        {drineas2006fast}
\bibfield{author}{\bibinfo{person}{Petros Drineas}, \bibinfo{person}{Ravi
  Kannan}, {and} \bibinfo{person}{Michael~W Mahoney}.}
  \bibinfo{year}{2006}\natexlab{}.
\newblock \showarticletitle{Fast Monte Carlo algorithms for matrices I:
  Approximating matrix multiplication}.
\newblock \bibinfo{journal}{\emph{SIAM J. Comput.}} \bibinfo{volume}{36},
  \bibinfo{number}{1} (\bibinfo{year}{2006}), \bibinfo{pages}{132--157}.
\newblock


\bibitem[\protect\citeauthoryear{Duchi, Hazan, and Singer}{Duchi
  et~al\mbox{.}}{2011}]%
        {duchi2011adaptive}
\bibfield{author}{\bibinfo{person}{John Duchi}, \bibinfo{person}{Elad Hazan},
  {and} \bibinfo{person}{Yoram Singer}.} \bibinfo{year}{2011}\natexlab{}.
\newblock \showarticletitle{Adaptive subgradient methods for online learning
  and stochastic optimization}.
\newblock \bibinfo{journal}{\emph{Journal of Machine Learning Research}}
  \bibinfo{volume}{12}, \bibinfo{number}{Jul} (\bibinfo{year}{2011}),
  \bibinfo{pages}{2121--2159}.
\newblock


\bibitem[\protect\citeauthoryear{Farber and Asanovic}{Farber and
  Asanovic}{1997}]%
        {farber1997parallel}
\bibfield{author}{\bibinfo{person}{Philipp Farber} {and} \bibinfo{person}{Krste
  Asanovic}.} \bibinfo{year}{1997}\natexlab{}.
\newblock \showarticletitle{Parallel neural network training on multi-spert}.
  In \bibinfo{booktitle}{\emph{Algorithms and Architectures for Parallel
  Processing, 1997. ICAPP 97., 1997 3rd International Conference on}}. IEEE,
  \bibinfo{pages}{659--666}.
\newblock


\bibitem[\protect\citeauthoryear{Golmant, Vemuri, Yao, Feinberg, Gholami,
  Rothauge, Mahoney, and Gonzalez}{Golmant et~al\mbox{.}}{2018}]%
        {golmant2018computational}
\bibfield{author}{\bibinfo{person}{Noah Golmant}, \bibinfo{person}{Nikita
  Vemuri}, \bibinfo{person}{Zhewei Yao}, \bibinfo{person}{Vladimir Feinberg},
  \bibinfo{person}{Amir Gholami}, \bibinfo{person}{Kai Rothauge},
  \bibinfo{person}{Michael~W Mahoney}, {and} \bibinfo{person}{Joseph
  Gonzalez}.} \bibinfo{year}{2018}\natexlab{}.
\newblock \showarticletitle{On the Computational Inefficiency of Large Batch
  Sizes for Stochastic Gradient Descent}.
\newblock \bibinfo{journal}{\emph{arXiv preprint arXiv:1811.12941}}
  (\bibinfo{year}{2018}).
\newblock


\bibitem[\protect\citeauthoryear{Goyal, Doll{\'a}r, Girshick, Noordhuis,
  Wesolowski, Kyrola, Tulloch, Jia, and He}{Goyal et~al\mbox{.}}{2017}]%
        {goyal2017accurate}
\bibfield{author}{\bibinfo{person}{Priya Goyal}, \bibinfo{person}{Piotr
  Doll{\'a}r}, \bibinfo{person}{Ross Girshick}, \bibinfo{person}{Pieter
  Noordhuis}, \bibinfo{person}{Lukasz Wesolowski}, \bibinfo{person}{Aapo
  Kyrola}, \bibinfo{person}{Andrew Tulloch}, \bibinfo{person}{Yangqing Jia},
  {and} \bibinfo{person}{Kaiming He}.} \bibinfo{year}{2017}\natexlab{}.
\newblock \showarticletitle{Accurate, large minibatch {SGD}: training
  {ImageNet} in 1 hour}.
\newblock \bibinfo{journal}{\emph{arXiv preprint arXiv:1706.02677}}
  (\bibinfo{year}{2017}).
\newblock


\bibitem[\protect\citeauthoryear{Gupta, Agrawal, Gopalakrishnan, and
  Narayanan}{Gupta et~al\mbox{.}}{2015}]%
        {gupta2015deep}
\bibfield{author}{\bibinfo{person}{Suyog Gupta}, \bibinfo{person}{Ankur
  Agrawal}, \bibinfo{person}{Kailash Gopalakrishnan}, {and}
  \bibinfo{person}{Pritish Narayanan}.} \bibinfo{year}{2015}\natexlab{}.
\newblock \showarticletitle{Deep learning with limited numerical precision}. In
  \bibinfo{booktitle}{\emph{International Conference on Machine Learning}}.
  \bibinfo{pages}{1737--1746}.
\newblock


\bibitem[\protect\citeauthoryear{Hadjis, Zhang, Mitliagkas, Iter, and
  R{\'e}}{Hadjis et~al\mbox{.}}{2016}]%
        {hadjis2016omnivore}
\bibfield{author}{\bibinfo{person}{Stefan Hadjis}, \bibinfo{person}{Ce Zhang},
  \bibinfo{person}{Ioannis Mitliagkas}, \bibinfo{person}{Dan Iter}, {and}
  \bibinfo{person}{Christopher R{\'e}}.} \bibinfo{year}{2016}\natexlab{}.
\newblock \showarticletitle{Omnivore: An optimizer for multi-device deep
  learning on {CPUs and GPUs}}.
\newblock \bibinfo{journal}{\emph{arXiv preprint arXiv:1606.04487}}
  (\bibinfo{year}{2016}).
\newblock


\bibitem[\protect\citeauthoryear{Hard, Rao, Mathews, Ramaswamy, Beaufays,
  Augenstein, Eichner, Kiddon, and Ramage}{Hard et~al\mbox{.}}{2018}]%
        {hard2018federated}
\bibfield{author}{\bibinfo{person}{Andrew Hard}, \bibinfo{person}{Kanishka
  Rao}, \bibinfo{person}{Rajiv Mathews}, \bibinfo{person}{Swaroop Ramaswamy},
  \bibinfo{person}{Fran{\c{c}}oise Beaufays}, \bibinfo{person}{Sean
  Augenstein}, \bibinfo{person}{Hubert Eichner}, \bibinfo{person}{Chlo{\'e}
  Kiddon}, {and} \bibinfo{person}{Daniel Ramage}.}
  \bibinfo{year}{2018}\natexlab{}.
\newblock \showarticletitle{Federated learning for mobile keyboard prediction}.
\newblock \bibinfo{journal}{\emph{arXiv preprint arXiv:1811.03604}}
  (\bibinfo{year}{2018}).
\newblock


\bibitem[\protect\citeauthoryear{He, Zhang, Ren, and Sun}{He
  et~al\mbox{.}}{2016}]%
        {he2016deep}
\bibfield{author}{\bibinfo{person}{Kaiming He}, \bibinfo{person}{Xiangyu
  Zhang}, \bibinfo{person}{Shaoqing Ren}, {and} \bibinfo{person}{Jian Sun}.}
  \bibinfo{year}{2016}\natexlab{}.
\newblock \showarticletitle{Deep residual learning for image recognition}. In
  \bibinfo{booktitle}{\emph{Proceedings of the IEEE conference on computer
  vision and pattern recognition}}. \bibinfo{pages}{770--778}.
\newblock


\bibitem[\protect\citeauthoryear{Huang, Cheng, Bapna, Firat, Chen, Chen, Lee,
  Ngiam, Le, Wu, and Chen}{Huang et~al\mbox{.}}{2019}]%
        {huang2019gpipe}
\bibfield{author}{\bibinfo{person}{Yanping Huang}, \bibinfo{person}{Youlong
  Cheng}, \bibinfo{person}{Ankur Bapna}, \bibinfo{person}{Orhan Firat},
  \bibinfo{person}{Mia~Xu Chen}, \bibinfo{person}{Dehao Chen},
  \bibinfo{person}{HyoukJoong Lee}, \bibinfo{person}{Jiquan Ngiam},
  \bibinfo{person}{Quoc~V. Le}, \bibinfo{person}{Yonghui Wu}, {and}
  \bibinfo{person}{Zhifeng Chen}.} \bibinfo{year}{2019}\natexlab{}.
\newblock \bibinfo{title}{GPipe: Efficient Training of Giant Neural Networks
  using Pipeline Parallelism}.
\newblock
\newblock
\showeprint[arxiv]{1811.06965}~[cs.CV]


\bibitem[\protect\citeauthoryear{Hubara, Courbariaux, Soudry, El-Yaniv, and
  Bengio}{Hubara et~al\mbox{.}}{2017}]%
        {hubara2017quantized}
\bibfield{author}{\bibinfo{person}{Itay Hubara}, \bibinfo{person}{Matthieu
  Courbariaux}, \bibinfo{person}{Daniel Soudry}, \bibinfo{person}{Ran
  El-Yaniv}, {and} \bibinfo{person}{Yoshua Bengio}.}
  \bibinfo{year}{2017}\natexlab{}.
\newblock \showarticletitle{Quantized Neural Networks: Training Neural Networks
  with Low Precision Weights and Activations.}
\newblock \bibinfo{journal}{\emph{Journal of Machine Learning Research}}
  \bibinfo{volume}{18}, \bibinfo{number}{187} (\bibinfo{year}{2017}),
  \bibinfo{pages}{1--30}.
\newblock


\bibitem[\protect\citeauthoryear{Ioffe and Szegedy}{Ioffe and Szegedy}{2015}]%
        {ioffe2015batch}
\bibfield{author}{\bibinfo{person}{Sergey Ioffe} {and}
  \bibinfo{person}{Christian Szegedy}.} \bibinfo{year}{2015}\natexlab{}.
\newblock \showarticletitle{Batch Normalization: Accelerating Deep Network
  Training by Reducing Internal Covariate Shift}. In
  \bibinfo{booktitle}{\emph{International Conference on Machine Learning}}.
  \bibinfo{pages}{448--456}.
\newblock


\bibitem[\protect\citeauthoryear{Ivkin, Rothchild, Ullah, Stoica, Arora,
  et~al\mbox{.}}{Ivkin et~al\mbox{.}}{2019}]%
        {ivkin2019communication}
\bibfield{author}{\bibinfo{person}{Nikita Ivkin}, \bibinfo{person}{Daniel
  Rothchild}, \bibinfo{person}{Enayat Ullah}, \bibinfo{person}{Ion Stoica},
  \bibinfo{person}{Raman Arora}, {et~al\mbox{.}}}
  \bibinfo{year}{2019}\natexlab{}.
\newblock \showarticletitle{Communication-efficient distributed sgd with
  sketching}. In \bibinfo{booktitle}{\emph{Advances in Neural Information
  Processing Systems}}. \bibinfo{pages}{13144--13154}.
\newblock


\bibitem[\protect\citeauthoryear{Karimi, Nutini, and Schmidt}{Karimi
  et~al\mbox{.}}{2016}]%
        {karimi2016linear}
\bibfield{author}{\bibinfo{person}{H. Karimi}, \bibinfo{person}{J. Nutini},
  {and} \bibinfo{person}{M. Schmidt}.} \bibinfo{year}{2016}\natexlab{}.
\newblock \showarticletitle{Linear convergence of gradient and
  proximal-gradient methods under the {P}olyak-{\L}ojasiewicz condition}. In
  \bibinfo{booktitle}{\emph{Joint European Conference on Machine Learning and
  Knowledge Discovery in Databases}}. Springer, \bibinfo{pages}{795--811}.
\newblock


\bibitem[\protect\citeauthoryear{Keskar, Mudigere, Nocedal, Smelyanskiy, and
  Tang}{Keskar et~al\mbox{.}}{2016}]%
        {keskar2016large}
\bibfield{author}{\bibinfo{person}{Nitish~Shirish Keskar},
  \bibinfo{person}{Dheevatsa Mudigere}, \bibinfo{person}{Jorge Nocedal},
  \bibinfo{person}{Mikhail Smelyanskiy}, {and} \bibinfo{person}{Ping Tak~Peter
  Tang}.} \bibinfo{year}{2016}\natexlab{}.
\newblock \showarticletitle{On large-batch training for deep learning:
  {G}eneralization gap and sharp minima}.
\newblock \bibinfo{journal}{\emph{arXiv preprint arXiv:1609.04836}}
  (\bibinfo{year}{2016}).
\newblock


\bibitem[\protect\citeauthoryear{Khaled and Richt{\'a}rik}{Khaled and
  Richt{\'a}rik}{2019}]%
        {khaled2019gradient}
\bibfield{author}{\bibinfo{person}{A. Khaled} {and} \bibinfo{person}{P.
  Richt{\'a}rik}.} \bibinfo{year}{2019}\natexlab{}.
\newblock \showarticletitle{Gradient descent with compressed iterates}.
\newblock \bibinfo{journal}{\emph{arXiv preprint arXiv:1909.04716}}
  (\bibinfo{year}{2019}).
\newblock


\bibitem[\protect\citeauthoryear{Khan, Saad, Han, Hossain, and Hong}{Khan
  et~al\mbox{.}}{2021}]%
        {khan2021federated}
\bibfield{author}{\bibinfo{person}{Latif~U Khan}, \bibinfo{person}{Walid Saad},
  \bibinfo{person}{Zhu Han}, \bibinfo{person}{Ekram Hossain}, {and}
  \bibinfo{person}{Choong~Seon Hong}.} \bibinfo{year}{2021}\natexlab{}.
\newblock \showarticletitle{Federated learning for internet of things: Recent
  advances, taxonomy, and open challenges}.
\newblock \bibinfo{journal}{\emph{IEEE Communications Surveys \& Tutorials}}
  (\bibinfo{year}{2021}).
\newblock


\bibitem[\protect\citeauthoryear{Kingma and Ba}{Kingma and Ba}{2014}]%
        {kingma2014adam}
\bibfield{author}{\bibinfo{person}{Diederik~P Kingma} {and}
  \bibinfo{person}{Jimmy Ba}.} \bibinfo{year}{2014}\natexlab{}.
\newblock \showarticletitle{Adam: A method for stochastic optimization}.
\newblock \bibinfo{journal}{\emph{arXiv preprint arXiv:1412.6980}}
  (\bibinfo{year}{2014}).
\newblock


\bibitem[\protect\citeauthoryear{Kolesnikov, Beyer, Zhai, Puigcerver, Yung,
  Gelly, and Houlsby}{Kolesnikov et~al\mbox{.}}{2020}]%
        {kolesnikov2020big}
\bibfield{author}{\bibinfo{person}{Alexander Kolesnikov},
  \bibinfo{person}{Lucas Beyer}, \bibinfo{person}{Xiaohua Zhai},
  \bibinfo{person}{Joan Puigcerver}, \bibinfo{person}{Jessica Yung},
  \bibinfo{person}{Sylvain Gelly}, {and} \bibinfo{person}{Neil Houlsby}.}
  \bibinfo{year}{2020}\natexlab{}.
\newblock \showarticletitle{Big transfer (bit): General visual representation
  learning}. In \bibinfo{booktitle}{\emph{Computer Vision--ECCV 2020: 16th
  European Conference, Glasgow, UK, August 23--28, 2020, Proceedings, Part V
  16}}. Springer, \bibinfo{pages}{491--507}.
\newblock


\bibitem[\protect\citeauthoryear{Krizhevsky, Sutskever, and Hinton}{Krizhevsky
  et~al\mbox{.}}{2012}]%
        {krizhevsky2012imagenet}
\bibfield{author}{\bibinfo{person}{A. Krizhevsky}, \bibinfo{person}{I.
  Sutskever}, {and} \bibinfo{person}{G. Hinton}.}
  \bibinfo{year}{2012}\natexlab{}.
\newblock \showarticletitle{Imagenet classification with deep convolutional
  neural networks}. In \bibinfo{booktitle}{\emph{Advances in neural information
  processing systems}}. \bibinfo{pages}{1097--1105}.
\newblock


\bibitem[\protect\citeauthoryear{Leroy, Coucke, Lavril, Gisselbrecht, and
  Dureau}{Leroy et~al\mbox{.}}{2019}]%
        {leroy2019federated}
\bibfield{author}{\bibinfo{person}{David Leroy}, \bibinfo{person}{Alice
  Coucke}, \bibinfo{person}{Thibaut Lavril}, \bibinfo{person}{Thibault
  Gisselbrecht}, {and} \bibinfo{person}{Joseph Dureau}.}
  \bibinfo{year}{2019}\natexlab{}.
\newblock \showarticletitle{Federated learning for keyword spotting}. In
  \bibinfo{booktitle}{\emph{ICASSP 2019-2019 IEEE International Conference on
  Acoustics, Speech and Signal Processing (ICASSP)}}. IEEE,
  \bibinfo{pages}{6341--6345}.
\newblock


\bibitem[\protect\citeauthoryear{Li, Andersen, Park, Smola, Ahmed, Josifovski,
  Long, Shekita, and Su}{Li et~al\mbox{.}}{2014}]%
        {li2014scaling}
\bibfield{author}{\bibinfo{person}{Mu Li}, \bibinfo{person}{David~G Andersen},
  \bibinfo{person}{Jun~Woo Park}, \bibinfo{person}{Alexander~J Smola},
  \bibinfo{person}{Amr Ahmed}, \bibinfo{person}{Vanja Josifovski},
  \bibinfo{person}{James Long}, \bibinfo{person}{Eugene~J Shekita}, {and}
  \bibinfo{person}{Bor-Yiing Su}.} \bibinfo{year}{2014}\natexlab{}.
\newblock \showarticletitle{Scaling Distributed Machine Learning with the
  Parameter Server.}. In \bibinfo{booktitle}{\emph{OSDI}},
  Vol.~\bibinfo{volume}{14}. \bibinfo{pages}{583--598}.
\newblock


\bibitem[\protect\citeauthoryear{Lin, Stich, Patel, and Jaggi}{Lin
  et~al\mbox{.}}{2018}]%
        {lin2018don}
\bibfield{author}{\bibinfo{person}{Tao Lin}, \bibinfo{person}{Sebastian~U
  Stich}, \bibinfo{person}{Kumar~Kshitij Patel}, {and} \bibinfo{person}{Martin
  Jaggi}.} \bibinfo{year}{2018}\natexlab{}.
\newblock \showarticletitle{Don't use large mini-batches, use local SGD}.
\newblock \bibinfo{journal}{\emph{arXiv preprint arXiv:1808.07217}}
  (\bibinfo{year}{2018}).
\newblock


\bibitem[\protect\citeauthoryear{Ma, Montague, Ye, Yao, Gholami, Keutzer, and
  Mahoney}{Ma et~al\mbox{.}}{2019}]%
        {ma2019inefficiency}
\bibfield{author}{\bibinfo{person}{Linjian Ma}, \bibinfo{person}{Gabe
  Montague}, \bibinfo{person}{Jiayu Ye}, \bibinfo{person}{Zhewei Yao},
  \bibinfo{person}{Amir Gholami}, \bibinfo{person}{Kurt Keutzer}, {and}
  \bibinfo{person}{Michael~W Mahoney}.} \bibinfo{year}{2019}\natexlab{}.
\newblock \showarticletitle{Inefficiency of {K-FAC} for Large Batch Size
  Training}.
\newblock \bibinfo{journal}{\emph{arXiv preprint arXiv:1903.06237}}
  (\bibinfo{year}{2019}).
\newblock


\bibitem[\protect\citeauthoryear{Mcdonald, Mohri, Silberman, Walker, and
  Mann}{Mcdonald et~al\mbox{.}}{2009}]%
        {mcdonald2009efficient}
\bibfield{author}{\bibinfo{person}{Ryan Mcdonald}, \bibinfo{person}{Mehryar
  Mohri}, \bibinfo{person}{Nathan Silberman}, \bibinfo{person}{Dan Walker},
  {and} \bibinfo{person}{Gideon~S Mann}.} \bibinfo{year}{2009}\natexlab{}.
\newblock \showarticletitle{Efficient large-scale distributed training of
  conditional maximum entropy models}. In \bibinfo{booktitle}{\emph{Advances in
  Neural Information Processing Systems}}. \bibinfo{pages}{1231--1239}.
\newblock


\bibitem[\protect\citeauthoryear{Mehrotra, Djomehri, Heistand, Hood, Jin,
  Lazanoff, Saini, and Biswas}{Mehrotra et~al\mbox{.}}{2012}]%
        {mehrotra2012performance}
\bibfield{author}{\bibinfo{person}{Piyush Mehrotra}, \bibinfo{person}{Jahed
  Djomehri}, \bibinfo{person}{Steve Heistand}, \bibinfo{person}{Robert Hood},
  \bibinfo{person}{Haoqiang Jin}, \bibinfo{person}{Arthur Lazanoff},
  \bibinfo{person}{Subhash Saini}, {and} \bibinfo{person}{Rupak Biswas}.}
  \bibinfo{year}{2012}\natexlab{}.
\newblock \showarticletitle{Performance evaluation of Amazon EC2 for NASA HPC
  applications}. In \bibinfo{booktitle}{\emph{Proceedings of the 3rd workshop
  on Scientific Cloud Computing}}. \bibinfo{pages}{41--50}.
\newblock


\bibitem[\protect\citeauthoryear{Paine, Jin, Yang, Lin, and Huang}{Paine
  et~al\mbox{.}}{2013}]%
        {paine2013gpu}
\bibfield{author}{\bibinfo{person}{Thomas Paine}, \bibinfo{person}{Hailin Jin},
  \bibinfo{person}{Jianchao Yang}, \bibinfo{person}{Zhe Lin}, {and}
  \bibinfo{person}{Thomas Huang}.} \bibinfo{year}{2013}\natexlab{}.
\newblock \showarticletitle{{GPU} asynchronous stochastic gradient descent to
  speed up neural network training}.
\newblock \bibinfo{journal}{\emph{arXiv preprint arXiv:1312.6186}}
  (\bibinfo{year}{2013}).
\newblock


\bibitem[\protect\citeauthoryear{Paszke, Gross, Chintala, Chanan, Yang, DeVito,
  Lin, Desmaison, Antiga, and Lerer}{Paszke et~al\mbox{.}}{2017}]%
        {paszke2017automatic}
\bibfield{author}{\bibinfo{person}{Adam Paszke}, \bibinfo{person}{Sam Gross},
  \bibinfo{person}{Soumith Chintala}, \bibinfo{person}{Gregory Chanan},
  \bibinfo{person}{Edward Yang}, \bibinfo{person}{Zachary DeVito},
  \bibinfo{person}{Zeming Lin}, \bibinfo{person}{Alban Desmaison},
  \bibinfo{person}{Luca Antiga}, {and} \bibinfo{person}{Adam Lerer}.}
  \bibinfo{year}{2017}\natexlab{}.
\newblock \showarticletitle{Automatic differentiation in pytorch}.
\newblock  (\bibinfo{year}{2017}).
\newblock


\bibitem[\protect\citeauthoryear{Pichai}{Pichai}{2019}]%
        {pichai2019privacy}
\bibfield{author}{\bibinfo{person}{Sundar Pichai}.}
  \bibinfo{year}{2019}\natexlab{}.
\newblock \showarticletitle{Privacy should not be a luxury good}.
\newblock \bibinfo{journal}{\emph{The New York Times}} (\bibinfo{year}{2019}).
\newblock


\bibitem[\protect\citeauthoryear{Raina, Madhavan, and Ng}{Raina
  et~al\mbox{.}}{2009}]%
        {raina2009large}
\bibfield{author}{\bibinfo{person}{Rajat Raina}, \bibinfo{person}{Anand
  Madhavan}, {and} \bibinfo{person}{Andrew~Y Ng}.}
  \bibinfo{year}{2009}\natexlab{}.
\newblock \showarticletitle{Large-scale deep unsupervised learning using
  graphics processors}. In \bibinfo{booktitle}{\emph{Proceedings of the 26th
  annual international conference on machine learning}}. ACM,
  \bibinfo{pages}{873--880}.
\newblock


\bibitem[\protect\citeauthoryear{Ramaswamy, Mathews, Rao, and
  Beaufays}{Ramaswamy et~al\mbox{.}}{2019}]%
        {ramaswamy2019federated}
\bibfield{author}{\bibinfo{person}{Swaroop Ramaswamy}, \bibinfo{person}{Rajiv
  Mathews}, \bibinfo{person}{Kanishka Rao}, {and}
  \bibinfo{person}{Fran{\c{c}}oise Beaufays}.} \bibinfo{year}{2019}\natexlab{}.
\newblock \showarticletitle{Federated learning for emoji prediction in a mobile
  keyboard}.
\newblock \bibinfo{journal}{\emph{arXiv preprint arXiv:1906.04329}}
  (\bibinfo{year}{2019}).
\newblock


\bibitem[\protect\citeauthoryear{Ratner, Alistarh, Alonso, Bailis, Bird,
  Carlini, Catanzaro, Chung, Dally, Dean, et~al\mbox{.}}{Ratner
  et~al\mbox{.}}{2019}]%
        {ratner2019sysml}
\bibfield{author}{\bibinfo{person}{Alexander Ratner}, \bibinfo{person}{Dan
  Alistarh}, \bibinfo{person}{Gustavo Alonso}, \bibinfo{person}{Peter Bailis},
  \bibinfo{person}{Sarah Bird}, \bibinfo{person}{Nicholas Carlini},
  \bibinfo{person}{Bryan Catanzaro}, \bibinfo{person}{Eric Chung},
  \bibinfo{person}{Bill Dally}, \bibinfo{person}{Jeff Dean}, {et~al\mbox{.}}}
  \bibinfo{year}{2019}\natexlab{}.
\newblock \showarticletitle{{SysML}: The New Frontier of Machine Learning
  Systems}.
\newblock \bibinfo{journal}{\emph{arXiv preprint arXiv:1904.03257}}
  (\bibinfo{year}{2019}).
\newblock


\bibitem[\protect\citeauthoryear{Recht, Re, Wright, and Niu}{Recht
  et~al\mbox{.}}{2011}]%
        {recht2011hogwild}
\bibfield{author}{\bibinfo{person}{Benjamin Recht},
  \bibinfo{person}{Christopher Re}, \bibinfo{person}{Stephen Wright}, {and}
  \bibinfo{person}{Feng Niu}.} \bibinfo{year}{2011}\natexlab{}.
\newblock \showarticletitle{Hogwild: A lock-free approach to parallelizing
  stochastic gradient descent}. In \bibinfo{booktitle}{\emph{Advances in neural
  information processing systems}}. \bibinfo{pages}{693--701}.
\newblock


\bibitem[\protect\citeauthoryear{Ridnik, Ben-Baruch, Noy, and
  Zelnik-Manor}{Ridnik et~al\mbox{.}}{2021}]%
        {ridnik2021imagenet}
\bibfield{author}{\bibinfo{person}{Tal Ridnik}, \bibinfo{person}{Emanuel
  Ben-Baruch}, \bibinfo{person}{Asaf Noy}, {and} \bibinfo{person}{Lihi
  Zelnik-Manor}.} \bibinfo{year}{2021}\natexlab{}.
\newblock \showarticletitle{Imagenet-21k pretraining for the masses}.
\newblock \bibinfo{journal}{\emph{arXiv preprint arXiv:2104.10972}}
  (\bibinfo{year}{2021}).
\newblock


\bibitem[\protect\citeauthoryear{Rieke, Hancox, Li, Milletari, Roth,
  Albarqouni, Bakas, Galtier, Landman, Maier-Hein, et~al\mbox{.}}{Rieke
  et~al\mbox{.}}{2020}]%
        {rieke2020future}
\bibfield{author}{\bibinfo{person}{Nicola Rieke}, \bibinfo{person}{Jonny
  Hancox}, \bibinfo{person}{Wenqi Li}, \bibinfo{person}{Fausto Milletari},
  \bibinfo{person}{Holger~R Roth}, \bibinfo{person}{Shadi Albarqouni},
  \bibinfo{person}{Spyridon Bakas}, \bibinfo{person}{Mathieu~N Galtier},
  \bibinfo{person}{Bennett~A Landman}, \bibinfo{person}{Klaus Maier-Hein},
  {et~al\mbox{.}}} \bibinfo{year}{2020}\natexlab{}.
\newblock \showarticletitle{The future of digital health with federated
  learning}.
\newblock \bibinfo{journal}{\emph{NPJ digital medicine}} \bibinfo{volume}{3},
  \bibinfo{number}{1} (\bibinfo{year}{2020}), \bibinfo{pages}{1--7}.
\newblock


\bibitem[\protect\citeauthoryear{Ruder}{Ruder}{2016}]%
        {ruder2016overview}
\bibfield{author}{\bibinfo{person}{Sebastian Ruder}.}
  \bibinfo{year}{2016}\natexlab{}.
\newblock \showarticletitle{An overview of gradient descent optimization
  algorithms}.
\newblock \bibinfo{journal}{\emph{arXiv preprint arXiv:1609.04747}}
  (\bibinfo{year}{2016}).
\newblock


\bibitem[\protect\citeauthoryear{Seide, Fu, Droppo, Li, and Yu}{Seide
  et~al\mbox{.}}{2014}]%
        {seide2014bit}
\bibfield{author}{\bibinfo{person}{Frank Seide}, \bibinfo{person}{Hao Fu},
  \bibinfo{person}{Jasha Droppo}, \bibinfo{person}{Gang Li}, {and}
  \bibinfo{person}{Dong Yu}.} \bibinfo{year}{2014}\natexlab{}.
\newblock \showarticletitle{1-bit stochastic gradient descent and its
  application to data-parallel distributed training of speech {DNN}s}. In
  \bibinfo{booktitle}{\emph{Fifteenth Annual Conference of the International
  Speech Communication Association}}.
\newblock


\bibitem[\protect\citeauthoryear{Sergeev and Del~Balso}{Sergeev and
  Del~Balso}{2018}]%
        {sergeev2018horovod}
\bibfield{author}{\bibinfo{person}{Alexander Sergeev} {and}
  \bibinfo{person}{Mike Del~Balso}.} \bibinfo{year}{2018}\natexlab{}.
\newblock \showarticletitle{Horovod: fast and easy distributed deep learning in
  TensorFlow}.
\newblock \bibinfo{journal}{\emph{arXiv preprint arXiv:1802.05799}}
  (\bibinfo{year}{2018}).
\newblock


\bibitem[\protect\citeauthoryear{Simonyan and Zisserman}{Simonyan and
  Zisserman}{2014}]%
        {simonyan2014very}
\bibfield{author}{\bibinfo{person}{K. Simonyan} {and} \bibinfo{person}{A.
  Zisserman}.} \bibinfo{year}{2014}\natexlab{}.
\newblock \showarticletitle{Very deep convolutional networks for large-scale
  image recognition}.
\newblock \bibinfo{journal}{\emph{arXiv preprint arXiv:1409.1556}}
  (\bibinfo{year}{2014}).
\newblock


\bibitem[\protect\citeauthoryear{Smith, Kindermans, Ying, and Le}{Smith
  et~al\mbox{.}}{2017}]%
        {smith2017don}
\bibfield{author}{\bibinfo{person}{Samuel~L Smith}, \bibinfo{person}{Pieter-Jan
  Kindermans}, \bibinfo{person}{Chris Ying}, {and} \bibinfo{person}{Quoc~V
  Le}.} \bibinfo{year}{2017}\natexlab{}.
\newblock \showarticletitle{Don't decay the learning rate, increase the batch
  size}.
\newblock \bibinfo{journal}{\emph{arXiv preprint arXiv:1711.00489}}
  (\bibinfo{year}{2017}).
\newblock


\bibitem[\protect\citeauthoryear{Srinidhi, Ciga, and Martel}{Srinidhi
  et~al\mbox{.}}{2020}]%
        {srinidhi2020deep}
\bibfield{author}{\bibinfo{person}{Chetan~L Srinidhi}, \bibinfo{person}{Ozan
  Ciga}, {and} \bibinfo{person}{Anne~L Martel}.}
  \bibinfo{year}{2020}\natexlab{}.
\newblock \showarticletitle{Deep neural network models for computational
  histopathology: A survey}.
\newblock \bibinfo{journal}{\emph{Medical Image Analysis}}
  (\bibinfo{year}{2020}), \bibinfo{pages}{101813}.
\newblock


\bibitem[\protect\citeauthoryear{Srivastava, Hinton, Krizhevsky, Sutskever, and
  Salakhutdinov}{Srivastava et~al\mbox{.}}{2014}]%
        {srivastava2014dropout}
\bibfield{author}{\bibinfo{person}{Nitish Srivastava},
  \bibinfo{person}{Geoffrey Hinton}, \bibinfo{person}{Alex Krizhevsky},
  \bibinfo{person}{Ilya Sutskever}, {and} \bibinfo{person}{Ruslan
  Salakhutdinov}.} \bibinfo{year}{2014}\natexlab{}.
\newblock \showarticletitle{Dropout: a simple way to prevent neural networks
  from overfitting}.
\newblock \bibinfo{journal}{\emph{The Journal of Machine Learning Research}}
  \bibinfo{volume}{15}, \bibinfo{number}{1} (\bibinfo{year}{2014}),
  \bibinfo{pages}{1929--1958}.
\newblock


\bibitem[\protect\citeauthoryear{Stevens, Volkmann, and Newman}{Stevens
  et~al\mbox{.}}{1937}]%
        {stevens1937scale}
\bibfield{author}{\bibinfo{person}{Stanley~Smith Stevens},
  \bibinfo{person}{John Volkmann}, {and} \bibinfo{person}{Edwin~B Newman}.}
  \bibinfo{year}{1937}\natexlab{}.
\newblock \showarticletitle{A scale for the measurement of the psychological
  magnitude pitch}.
\newblock \bibinfo{journal}{\emph{The Journal of the Acoustical Society of
  America}} \bibinfo{volume}{8}, \bibinfo{number}{3} (\bibinfo{year}{1937}),
  \bibinfo{pages}{185--190}.
\newblock


\bibitem[\protect\citeauthoryear{Vogels, Karimireddy, and Jaggi}{Vogels
  et~al\mbox{.}}{2019}]%
        {vogels2019powersgd}
\bibfield{author}{\bibinfo{person}{Thijs Vogels}, \bibinfo{person}{Sai~Praneeth
  Karimireddy}, {and} \bibinfo{person}{Martin Jaggi}.}
  \bibinfo{year}{2019}\natexlab{}.
\newblock \showarticletitle{PowerSGD: Practical low-rank gradient compression
  for distributed optimization}. In \bibinfo{booktitle}{\emph{Advances in
  Neural Information Processing Systems}}. \bibinfo{pages}{14236--14245}.
\newblock


\bibitem[\protect\citeauthoryear{Warden}{Warden}{2018}]%
        {warden2018speech}
\bibfield{author}{\bibinfo{person}{Pete Warden}.}
  \bibinfo{year}{2018}\natexlab{}.
\newblock \showarticletitle{Speech commands: A dataset for limited-vocabulary
  speech recognition}.
\newblock \bibinfo{journal}{\emph{arXiv preprint arXiv:1804.03209}}
  (\bibinfo{year}{2018}).
\newblock


\bibitem[\protect\citeauthoryear{Wen, Xu, Yan, Wu, Wang, Chen, and Li}{Wen
  et~al\mbox{.}}{2017}]%
        {wen2017terngrad}
\bibfield{author}{\bibinfo{person}{Wei Wen}, \bibinfo{person}{Cong Xu},
  \bibinfo{person}{Feng Yan}, \bibinfo{person}{Chunpeng Wu},
  \bibinfo{person}{Yandan Wang}, \bibinfo{person}{Yiran Chen}, {and}
  \bibinfo{person}{Hai Li}.} \bibinfo{year}{2017}\natexlab{}.
\newblock \showarticletitle{Terngrad: Ternary gradients to reduce communication
  in distributed deep learning}. In \bibinfo{booktitle}{\emph{Advances in
  neural information processing systems}}. \bibinfo{pages}{1509--1519}.
\newblock


\bibitem[\protect\citeauthoryear{Wright and Nocedal}{Wright and
  Nocedal}{1999}]%
        {wright1999numerical}
\bibfield{author}{\bibinfo{person}{Stephen Wright} {and} \bibinfo{person}{Jorge
  Nocedal}.} \bibinfo{year}{1999}\natexlab{}.
\newblock \showarticletitle{Numerical optimization}.
\newblock \bibinfo{journal}{\emph{Springer Science}} \bibinfo{volume}{35},
  \bibinfo{number}{67-68} (\bibinfo{year}{1999}), \bibinfo{pages}{7}.
\newblock


\bibitem[\protect\citeauthoryear{Xu}{Xu}{2018}]%
        {xu2018nccl}
\bibfield{author}{\bibinfo{person}{Alfred Xu}.}
  \bibinfo{year}{2018}\natexlab{}.
\newblock \showarticletitle{NCCL BASED MULTI-GPU TRAINING}.
\newblock
  \bibinfo{howpublished}{\url{http://on-demand.gputechconf.com/gtc-cn/2018/pdf/CH8209.pdf}}.
\newblock  (\bibinfo{year}{2018}).
\newblock
\newblock
\shownote{Accessed: 2020-02-06.}


\bibitem[\protect\citeauthoryear{Yadan, Adams, Taigman, and Ranzato}{Yadan
  et~al\mbox{.}}{2013}]%
        {yadan2013multi}
\bibfield{author}{\bibinfo{person}{Omry Yadan}, \bibinfo{person}{Keith Adams},
  \bibinfo{person}{Yaniv Taigman}, {and} \bibinfo{person}{Marc'Aurelio
  Ranzato}.} \bibinfo{year}{2013}\natexlab{}.
\newblock \showarticletitle{Multi-{GPU} training of convnets}.
\newblock \bibinfo{journal}{\emph{arXiv preprint arXiv:1312.5853}}
  (\bibinfo{year}{2013}).
\newblock


\bibitem[\protect\citeauthoryear{Yang, Andrew, Eichner, Sun, Li, Kong, Ramage,
  and Beaufays}{Yang et~al\mbox{.}}{2018}]%
        {yang2018applied}
\bibfield{author}{\bibinfo{person}{Timothy Yang}, \bibinfo{person}{Galen
  Andrew}, \bibinfo{person}{Hubert Eichner}, \bibinfo{person}{Haicheng Sun},
  \bibinfo{person}{Wei Li}, \bibinfo{person}{Nicholas Kong},
  \bibinfo{person}{Daniel Ramage}, {and} \bibinfo{person}{Fran{\c{c}}oise
  Beaufays}.} \bibinfo{year}{2018}\natexlab{}.
\newblock \showarticletitle{Applied federated learning: Improving google
  keyboard query suggestions}.
\newblock \bibinfo{journal}{\emph{arXiv preprint arXiv:1812.02903}}
  (\bibinfo{year}{2018}).
\newblock


\bibitem[\protect\citeauthoryear{Yao, Gholami, Lei, Keutzer, and Mahoney}{Yao
  et~al\mbox{.}}{2018}]%
        {yao2018hessian}
\bibfield{author}{\bibinfo{person}{Zhewei Yao}, \bibinfo{person}{Amir Gholami},
  \bibinfo{person}{Qi Lei}, \bibinfo{person}{Kurt Keutzer}, {and}
  \bibinfo{person}{Michael~W Mahoney}.} \bibinfo{year}{2018}\natexlab{}.
\newblock \showarticletitle{Hessian-based analysis of large batch training and
  robustness to adversaries}. In \bibinfo{booktitle}{\emph{Advances in Neural
  Information Processing Systems}}. \bibinfo{pages}{4949--4959}.
\newblock


\bibitem[\protect\citeauthoryear{You, Gitman, and Ginsburg}{You
  et~al\mbox{.}}{2017}]%
        {you2017scaling}
\bibfield{author}{\bibinfo{person}{Yang You}, \bibinfo{person}{Igor Gitman},
  {and} \bibinfo{person}{Boris Ginsburg}.} \bibinfo{year}{2017}\natexlab{}.
\newblock \showarticletitle{Scaling {SGD} batch size to {32K} for {ImageNet}
  training}.
\newblock \bibinfo{journal}{\emph{arXiv preprint arXiv:1708.03888}}
  (\bibinfo{year}{2017}).
\newblock


\bibitem[\protect\citeauthoryear{You, Hseu, Ying, Demmel, Keutzer, and
  Hsieh}{You et~al\mbox{.}}{2019a}]%
        {you2019large}
\bibfield{author}{\bibinfo{person}{Yang You}, \bibinfo{person}{Jonathan Hseu},
  \bibinfo{person}{Chris Ying}, \bibinfo{person}{James Demmel},
  \bibinfo{person}{Kurt Keutzer}, {and} \bibinfo{person}{Cho-Jui Hsieh}.}
  \bibinfo{year}{2019}\natexlab{a}.
\newblock \showarticletitle{Large-Batch Training for {LSTM} and Beyond}.
\newblock \bibinfo{journal}{\emph{arXiv preprint arXiv:1901.08256}}
  (\bibinfo{year}{2019}).
\newblock


\bibitem[\protect\citeauthoryear{You, Li, Hseu, Song, Demmel, and Hsieh}{You
  et~al\mbox{.}}{2019b}]%
        {you2019reducing}
\bibfield{author}{\bibinfo{person}{Yang You}, \bibinfo{person}{Jing Li},
  \bibinfo{person}{Jonathan Hseu}, \bibinfo{person}{Xiaodan Song},
  \bibinfo{person}{James Demmel}, {and} \bibinfo{person}{Cho-Jui Hsieh}.}
  \bibinfo{year}{2019}\natexlab{b}.
\newblock \showarticletitle{Reducing {BERT} Pre-Training Time from 3 Days to 76
  Minutes}.
\newblock \bibinfo{journal}{\emph{arXiv preprint arXiv:1904.00962}}
  (\bibinfo{year}{2019}).
\newblock


\bibitem[\protect\citeauthoryear{Zeiler}{Zeiler}{2012}]%
        {zeiler2012adadelta}
\bibfield{author}{\bibinfo{person}{Matthew~D Zeiler}.}
  \bibinfo{year}{2012}\natexlab{}.
\newblock \showarticletitle{ADADELTA: an adaptive learning rate method}.
\newblock \bibinfo{journal}{\emph{arXiv preprint arXiv:1212.5701}}
  (\bibinfo{year}{2012}).
\newblock


\bibitem[\protect\citeauthoryear{Zhang and R{\'e}}{Zhang and R{\'e}}{2014}]%
        {zhang2014dimmwitted}
\bibfield{author}{\bibinfo{person}{Ce Zhang} {and} \bibinfo{person}{Christopher
  R{\'e}}.} \bibinfo{year}{2014}\natexlab{}.
\newblock \showarticletitle{Dimmwitted: A study of main-memory statistical
  analytics}.
\newblock \bibinfo{journal}{\emph{Proceedings of the VLDB Endowment}}
  \bibinfo{volume}{7}, \bibinfo{number}{12} (\bibinfo{year}{2014}),
  \bibinfo{pages}{1283--1294}.
\newblock


\bibitem[\protect\citeauthoryear{Zhang, De~Sa, Mitliagkas, and R{\'e}}{Zhang
  et~al\mbox{.}}{2016}]%
        {zhang2016parallel}
\bibfield{author}{\bibinfo{person}{Jian Zhang}, \bibinfo{person}{Christopher
  De~Sa}, \bibinfo{person}{Ioannis Mitliagkas}, {and}
  \bibinfo{person}{Christopher R{\'e}}.} \bibinfo{year}{2016}\natexlab{}.
\newblock \showarticletitle{Parallel {SGD}: When does averaging help?}
\newblock \bibinfo{journal}{\emph{arXiv preprint arXiv:1606.07365}}
  (\bibinfo{year}{2016}).
\newblock


\bibitem[\protect\citeauthoryear{Zhang, Zhang, You, Zheng, and Xu}{Zhang
  et~al\mbox{.}}{2013}]%
        {zhang2013asynchronous}
\bibfield{author}{\bibinfo{person}{Shanshan Zhang}, \bibinfo{person}{Ce Zhang},
  \bibinfo{person}{Zhao You}, \bibinfo{person}{Rong Zheng}, {and}
  \bibinfo{person}{Bo Xu}.} \bibinfo{year}{2013}\natexlab{}.
\newblock \showarticletitle{Asynchronous stochastic gradient descent for {DNN}
  training}. In \bibinfo{booktitle}{\emph{Acoustics, Speech and Signal
  Processing (ICASSP), 2013 IEEE International Conference on}}. IEEE,
  \bibinfo{pages}{6660--6663}.
\newblock


\bibitem[\protect\citeauthoryear{Zhang, Mckenna, Mesirov, and Waltz}{Zhang
  et~al\mbox{.}}{1990}]%
        {zhang1990efficient}
\bibfield{author}{\bibinfo{person}{Xiru Zhang}, \bibinfo{person}{Michael
  Mckenna}, \bibinfo{person}{Jill~P Mesirov}, {and} \bibinfo{person}{David~L
  Waltz}.} \bibinfo{year}{1990}\natexlab{}.
\newblock \showarticletitle{An efficient implementation of the back-propagation
  algorithm on the connection machine {CM-2}}. In
  \bibinfo{booktitle}{\emph{Advances in neural information processing
  systems}}. \bibinfo{pages}{801--809}.
\newblock


\bibitem[\protect\citeauthoryear{Zhu, Xie, Han, and Guo}{Zhu
  et~al\mbox{.}}{2020}]%
        {zhu2020application}
\bibfield{author}{\bibinfo{person}{Wan Zhu}, \bibinfo{person}{Longxiang Xie},
  \bibinfo{person}{Jianye Han}, {and} \bibinfo{person}{Xiangqian Guo}.}
  \bibinfo{year}{2020}\natexlab{}.
\newblock \showarticletitle{The application of deep learning in cancer
  prognosis prediction}.
\newblock \bibinfo{journal}{\emph{Cancers}} \bibinfo{volume}{12},
  \bibinfo{number}{3} (\bibinfo{year}{2020}), \bibinfo{pages}{603}.
\newblock


\bibitem[\protect\citeauthoryear{Zinkevich, Weimer, Li, and Smola}{Zinkevich
  et~al\mbox{.}}{2010}]%
        {zinkevich2010parallelized}
\bibfield{author}{\bibinfo{person}{Martin Zinkevich}, \bibinfo{person}{Markus
  Weimer}, \bibinfo{person}{Lihong Li}, {and} \bibinfo{person}{Alex~J Smola}.}
  \bibinfo{year}{2010}\natexlab{}.
\newblock \showarticletitle{Parallelized stochastic gradient descent}. In
  \bibinfo{booktitle}{\emph{Advances in neural information processing
  systems}}. \bibinfo{pages}{2595--2603}.
\newblock


\end{thebibliography}

\end{document}